\documentclass{article}

\usepackage{microtype}
\usepackage{graphicx}
\usepackage{subcaption}
\usepackage{booktabs} 

\usepackage{hyperref}

\usepackage[accepted]{icml2026}

\usepackage{amsmath}
\usepackage{amssymb}
\usepackage{mathtools}
\usepackage{amsthm}

\usepackage[capitalize,noabbrev]{cleveref}

\usepackage{enumitem}
\usepackage{pifont}
\newcommand{\cmark}{\ding{51}}%
\newcommand{\xmark}{\ding{55}}%

\theoremstyle{plain}
\newtheorem{theorem}{Theorem}[section]
\newtheorem{proposition}[theorem]{Proposition}

\theoremstyle{definition}

\theoremstyle{remark}

\icmltitlerunning{Equivariant Covariance Tensors: Guaranteed SPD Uncertainty for Tensor-Valued Geometric Learning}

\begin{document}

\twocolumn[
\icmltitle{Equivariant Covariance Tensors: Guaranteed SPD Uncertainty for \\
           Tensor-Valued Geometric Learning}



\icmlsetsymbol{equal}{*}

\begin{icmlauthorlist}
	\icmlauthor{Ruihan Liu}{fudanrobot}
	\icmlauthor{Yu Ji}{fudanrobot}
	\icmlauthor{Jianbo Yu}{fudanmicro}
	\icmlauthor{Shifu Yan}{bytedance}
	\icmlauthor{Qingchao Jiang}{ecust}
\end{icmlauthorlist}

\icmlaffiliation{fudanrobot}{College of Intelligent Robotics and Advanced Manufacturing, Fudan University, Shanghai 200433, China}
\icmlaffiliation{fudanmicro}{School of Microelectronics, Fudan University, Shanghai 200433, China}
\icmlaffiliation{bytedance}{ByteDance, Beijing, China}
\icmlaffiliation{ecust}{School of Information Science and Engineering, East China University of Science and Technology, Shanghai 200237, China}

\icmlcorrespondingauthor{Jianbo Yu}{jb\_yu@fudan.edu.cn}
\icmlkeywords{Machine Learning, Equivariant Neural Networks, Uncertainty Quantification, Geometric Deep Learning, Tensor-Valued Prediction}

\vskip 0.3in
]


\printAffiliationsAndNotice{}

\begin{abstract}
	Tensor-valued prediction is fundamental to geometric deep learning, yet uncertainty quantification (UQ) for such outputs remains an open challenge. While E(3)-equivariant neural networks excel at point estimates, they lack rigorous confidence measures. We focus on symmetric rank-2 tensor prediction, where the target has six Kelvin--Mandel coordinates and full uncertainty is represented by a $6\times6$ covariance matrix. We introduce a framework for E(3)-equivariant UQ, modeling the full predictive distribution where both mean and covariance preserve rotational symmetry. Our approach decomposes the covariance into irreducible representations $\mathrm{Sym}^2(\rho_c) \cong 2\times(l=0) \oplus 2\times(l=2) \oplus 1\times(l=4)$. By mapping from the flat Lie algebra $\mathfrak{sym}(6)$ to the curved SPD manifold via matrix exponentiation, we strictly ensure positive-definite covariances while maintaining exact equivariance. Furthermore, we formulate a Log-Euclidean Equivariant Scoring Objective (LE-ESO)---a robust surrogate loss based on the Multivariate Laplace distribution---providing robustness to heavy-tailed errors and stable optimization. Validation on ModelNet40 inertia tensors and Materials Project dielectric tensors demonstrates that our method achieves competitive performance and provides physically consistent, symmetry-preserving uncertainty estimates with useful risk and OOD sensitivity.
\end{abstract}

\section{Introduction}

Tensor-valued predictions are fundamental to scientific computing and geometric deep learning, with applications spanning material properties (elasticity tensors, dielectric response), biomedical imaging (diffusion MRI), and computational fluid dynamics. While E(3)-equivariant neural networks (ENNs) have achieved remarkable success in predicting tensorial properties like dielectric constants \cite{batatia2022mace, heilman2024equivariant}, they are inherently deterministic—outputting single point estimates without any confidence measure. This is a critical limitation: overconfident tensor predictions in scientific decisions can lead to costly experimental failures.

In this paper, the primary output is a symmetric rank-2 tensor $C \in \mathbb{R}^{3\times3}_{\text{sym}}$, such as a dielectric tensor. Although $C$ is a $3\times3$ matrix, symmetry leaves six independent degrees of freedom. We represent it by the Kelvin-Mandel vector $\mathbf{c} \in \mathbb{R}^6$, for which rotations act by an orthogonal six-dimensional representation $\rho_c(R)$. Therefore, uncertainty over $C$ is not a $3\times3$ covariance, but a $6\times6$ covariance over the Kelvin-Mandel coordinates.

Extending ENNs with uncertainty quantification poses a fundamental challenge: uncertainty estimates must themselves transform equivariantly. Mathematically, the covariance $\Sigma \in \mathbb{R}^{6\times6}$ must satisfy:
\[
\Sigma(R\!\cdot\!X) = \rho_c(R)\Sigma(X)\rho_c(R)^\top, \quad \forall R\in O(3),
\]
where $\rho_c(R)$ is the rotation matrix in Kelvin-Mandel notation. This creates a parameterization challenge: Cholesky-type covariance heads guarantee SPD but are not equivariant in Kelvin--Mandel covariance coordinates, while direct equivariant regression preserves the transformation law but does not guarantee SPD.

In this work, we introduce a principled framework for E(3)-equivariant full-covariance uncertainty quantification in symmetric tensor-valued prediction. Our contributions address this equivariant SPD parameterization challenge through: (1) an equivariant matrix-exponential head parameterizing the SPD manifold via irreducible decomposition $\mathrm{Sym}^2(\rho_c) \cong 2\times(\ell=0) \oplus 2\times(\ell=2) \oplus 1\times(\ell=4)$; (2) a stable joint optimization strategy via a Log-Euclidean scoring objective that integrates uncertainty calibration with geometric feature extraction; and (3) rigorous validation on ModelNet40 inertia tensors with competitive performance on Materials Project dielectric prediction (MAE 1.55).

\section{Related Work}

\begin{figure*}[t]
	\vskip 0.2in
	\begin{center}
		\centerline{\includegraphics[width=0.83\textwidth]{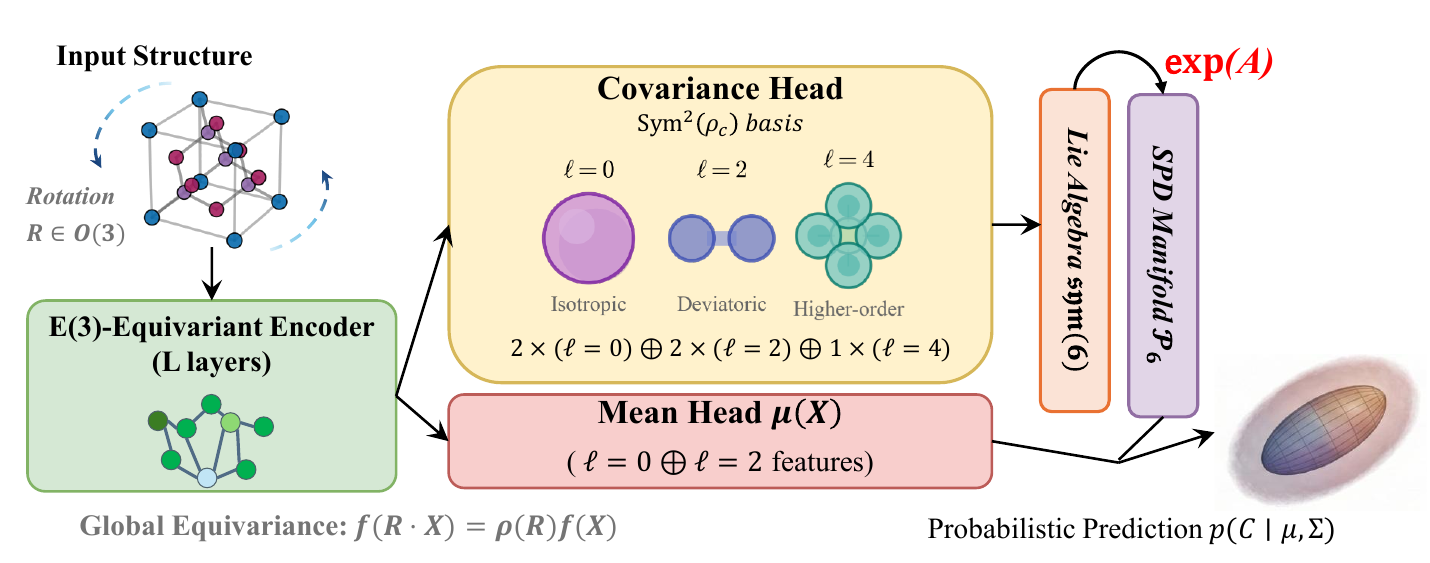}}
		\caption{\textbf{E(3)-equivariant tensor uncertainty framework.} (1) Feature Decomposition: covariance head models symmetry via irreducible representations ($\ell=0,2,4$). (2) Manifold Mapping: network predicts unconstrained $A \in \mathfrak{sym}(6)$. (3) Validity Constraint: $\Sigma=\exp(A)$ projects to SPD manifold $\mathcal{P}_6$.}
		\label{fig:overview}
	\end{center}
	\vskip -0.2in
\end{figure*}

\paragraph{Equivariant Tensor Prediction.}
E(3)-equivariant neural networks (ENNs) achieve state-of-the-art tensor prediction across materials science and 3D geometry \cite{batatia2022mace, heilman2024equivariant, hua2026goectp, pakornchote2023straintensornet, fung2021benchmarking, reiser2022graph, du2024densegnn, equer2023multi}. However, all existing ENNs are inherently deterministic—they cannot quantify confidence in their predictions. Our framework preserves equivariance guarantees while adding rigorous uncertainty quantification, addressing this critical gap. While our approach utilizes the spherical harmonic basis provided by e3nn \cite{geiger2022e3nn}, recent advances in representation theory, such as the High-Rank Irreducible Cartesian Tensor (ICT) decomposition framework \cite{shao2025high}, offer alternative analytical paths for constructing equivariant bases directly in Cartesian space. Such methods could potentially simplify the implementation for higher-rank tensor properties.

\paragraph{SPD Constraints in Neural Networks.}
Ensuring symmetric positive-definite (SPD) covariances is essential for valid uncertainty. Prior work uses Cholesky decomposition, eigendecomposition, or Riemannian optimization \cite{jekel2022neural, pouliquen2025schur, zhao2023modeling}. These methods guarantee SPD in a fixed coordinate parameterization, but they do not by themselves provide an equivariant map $X \mapsto \Sigma(X)$ under the covariance representation $\rho_c$. Orthogonal conjugation preserves SPD; thus the difficulty is not a conflict between the SPD property and rotation itself. Rather, the challenge is to design a neural parameterization that is simultaneously equivariant and constrained to the SPD cone. Cholesky-type heads enforce SPD but are not equivariant in Kelvin-Mandel covariance coordinates, whereas direct equivariant regression can preserve the transformation law but does not guarantee SPD. Our matrix-exponential head resolves this parameterization problem by predicting an equivariant symmetric operator $A(X)$ and setting $\Sigma(X) = \exp(A(X))$, which satisfies $\exp(\rho_c(R)A\rho_c(R)^\top) = \rho_c(R)\exp(A)\rho_c(R)^\top$ and therefore is jointly SPD and equivariant in all rotated frames.

\paragraph{Probabilistic Methods and Equivariant GPs.}
Probabilistic extensions for ENNs are severely underdeveloped. While ensemble methods can provide heuristic uncertainty estimates \cite{rudner2022tractable} and even exhibit emergent equivariance \cite{gerken2024emergent}, our framework explicitly learns the full 21-parameter aleatoric covariance tensor, which is essential for modeling the inherent anisotropic noise in physical properties. Bayesian neural networks face computational challenges at scale \cite{rensmeyer2024high, olivier2021bayesian, doan2025bayesian, sheinkman2025architecture}. Existing equivariant Bayesian approaches focus on scalar or vector quantities \cite{zhou2024bi}, missing tensor-valued uncertainty. E(3)-equivariant Gaussian processes provide theoretically sound uncertainties but are typically restricted to isotropic or diagonal covariance approximations \cite{steinert2025integration, bevanda2025koopman}—scaling them to the full 21-parameter covariance structure of rank-2 tensors remains computationally prohibitive. Our neural network approach offers computational scalability while learning complete equivariant tensor correlations. Recent work on uncertainty calibration and stable tensor operations provides theoretical grounding for our design \cite{berman2025uncertainty, gruber2022better, fakour2024structured, newman2024stable}.

\section{Methods}
\label{sec:methods}

\subsection{Problem Formulation}

We address the fundamental challenge of predicting tensor-valued quantities while quantifying predictive uncertainty. We present the construction for symmetric rank-2 tensors $C \in \mathbb{R}^{3\times3}_{\text{sym}}$, which already require a full $6\times6$ covariance representation. The SPD construction and scoring objective are representation-agnostic once an equivariant symmetric operator $A(X)$ is available. However, the parameterization of $A(X)$ is representation-specific and must be constructed separately for each tensor order and symmetry group. We predict $C$ from a 3D structure $X = \{ (\mathbf{x}_i, f_i) \}_{i=1}^N$, where $\mathbf{x}_i \in \mathbb{R}^3$ denotes spatial coordinates and $f_i$ represents associated features (such as atomic species in materials). This formulation encompasses important applications such as predicting dielectric tensors in crystal structures. Although we demonstrate this framework on materials science, the approach is applicable to any 3D point cloud with rank-2 tensorial attributes, ranging from biological molecules to geometric shapes.

Rather than producing deterministic point estimates, we model the full predictive distribution
\begin{equation}
p(C \mid X) = \text{Laplace}(C \mid \mu(X), \Sigma(X)),
\label{eq:predictive_dist}
\end{equation}
where $\mu(X) \in \mathbb{R}^{3\times3}_{\text{sym}}$ is the predicted mean tensor and $\Sigma(X)$ captures the predictive uncertainty through a covariance structure.

To maintain geometric consistency across all domains, both the mean prediction $\mu$ and covariance $\Sigma$ must satisfy equivariance constraints with respect to rotations and reflections. Since we predict global tensor properties (e.g., dielectric tensor of a unit cell), the predictions are invariant to translations but equivariant to orthogonal transformations:
\begin{align}
\mu(R \cdot X) &= R \mu(X) R^\top, \nonumber\\
\Sigma(R \cdot X) &= \rho_c(R)\, \Sigma(X)\, \rho_c(R)^\top.
\label{eq:equivariance}
\end{align}
Our construction ensures full $O(3)$ equivariance. For symmetric rank-2 tensors, the transformation under reflection ($\det R = -1$) is handled by the even-parity representation of the Kelvin-Mandel basis $\rho_c(R) = R \otimes_s R$, where $(\det R)^2 = 1$ guarantees consistent behavior for both chiral and achiral structures. We numerically verify $O(3)$ equivariance in Section~\ref{subsec:equivariance} (Table~\ref{tab:equivariance}), with detailed analysis in Appendix~\ref{app:reflection}. Predictions are translation-invariant since global tensor properties depend only on relative atomic positions. This universal equivariance constraint forms the foundation of our domain-agnostic framework. We emphasize that while Eq.~\ref{eq:predictive_dist} presents the distribution in standard predictive form, the training optimizes the robustified LE-ESO objective (Section~\ref{subsec:le_eso}) which generalizes the Multivariate Laplace negative log-likelihood with enhanced stability against outliers.

\subsection{Voigt Representation and Covariance Structure}

To facilitate neural network implementation while preserving the tensor's geometric structure, we employ the Kelvin-Mandel notation to flatten the symmetric tensor $C$ into a $6$-dimensional vector. Unlike standard Voigt representation, Kelvin-Mandel notation maintains the isometry property between tensor and vector spaces:

\begin{equation}
\mathbf{c}_{\text{KM}} = [C_{11}, C_{22}, C_{33}, \sqrt{2}C_{23}, \sqrt{2}C_{13}, \sqrt{2}C_{12}]^\top,
\end{equation}

which preserves the Frobenius norm: $\|\mathbf{c}_{\text{KM}}\|_2 = \|C\|_F$. Under rotation $R$, $\mathbf{c}_{\text{KM}}' = \rho_c(R)\mathbf{c}_{\text{KM}}$ where $\rho_c(R)$ is an orthogonal $6\times6$ matrix. The covariance transforms as $\Sigma' = \rho_c(R)\Sigma\rho_c(R)^\top$, maintaining coordinate invariance of the physical uncertainty.

\subsection{Irreducible Representation Decomposition}

The mathematical structure of equivariant uncertainty quantification becomes clear through representation theory. In group theory, irreducible representations are fundamental building blocks that cannot be further decomposed into smaller invariant subspaces. The 6D representation $\rho_c$ for symmetric $3\times3$ tensors decomposes into irreducible representations of $\mathrm{SO}(3)$ as
\begin{equation}
\rho_c \cong l=0 \oplus l=2,
\end{equation}
corresponding respectively to the isotropic (trace) and deviatoric (traceless) components of the tensor. Here $\cong$ denotes an isomorphism of representations, not equality of matrices: after a fixed change of basis, the six Kelvin-Mandel coordinates split into a one-dimensional isotropic trace component and a five-dimensional traceless deviatoric component.

Since $\Sigma$ transforms as $\rho_c \otimes \rho_c$, its representation decomposes as:
\begin{align}
\rho_c \otimes \rho_c
&= (l=0 \oplus l=2) \otimes (l=0 \oplus l=2) \nonumber\\
&= (l=0) \oplus 2(l=2) \oplus (l=4) \oplus (l=1,3)_{\text{antisym}}.
\end{align}
The $\ell=1$ and $\ell=3$ components lie in the antisymmetric part of $\rho_c \otimes \rho_c$, corresponding to operators that change sign under exchange of the two covariance indices. Since covariance matrices satisfy $\Sigma = \Sigma^\top$, only the symmetric square $\mathrm{Sym}^2(\rho_c)$ remains, yielding:
\begin{equation}
\mathrm{Sym}^2(\rho_c) \cong 2\times(l=0) \oplus 2\times(l=2) \oplus 1\times(l=4),
\label{eq:sym_decomp}
\end{equation}
This provides 21 independent degrees of freedom for symmetric $6\times6$ SPD covariance matrices.

\subsection{Equivariant Neural Architecture}

\label{subsec:architecture}
Figure~\ref{fig:overview} summarizes the data flow. A shared E(3)-equivariant encoder first maps the input structure $X$ to latent irreducible features. The mean head selects the $\ell=0 \oplus \ell=2$ components and outputs the Kelvin-Mandel mean vector $\boldsymbol{\mu}_{\text{KM}}(X) \in \mathbb{R}^6$, which is mapped back to a symmetric $3\times3$ tensor. The covariance head outputs coefficients in $2\times(\ell=0) \oplus 2\times(\ell=2) \oplus 1\times(\ell=4)$, which are assembled into an equivariant symmetric operator $A(X) \in \mathfrak{sym}(6)$. Finally, the predictive covariance is $\Sigma(X) = \exp(A(X))$, and the loss is computed from the Kelvin-Mandel residual $\Delta\mathbf{c} = \mathbf{c}_{\text{KM}} - \boldsymbol{\mu}_{\text{KM}}(X)$.

The covariance head $f_{\Sigma}: X \mapsto A(X)$ must satisfy the transformation property
\begin{equation}
A(R\!\cdot\! X) = \rho_c(R)\, A(X)\, \rho_c(R)^\top.
\label{eq:transform_A}
\end{equation}

Following the irreducible representation decomposition in Eq.~\ref{eq:sym_decomp}, we construct $A(X)$ through structured Clebsch-Gordan combinations. Let $\phi^{(L)}(X) \in \mathbb{R}^{F_L \times (2L+1)}$ denote spherical tensor features of order $L$ produced by the equivariant backbone. We index the irreps appearing in $\mathrm{Sym}^2(\rho_c)$ by
\[
\mathcal{I} = \{(0,1),\,(0,2),\,(2,1),\,(2,2),\,(4,1)\},
\]
where $(L, r)$ refers to the $r$-th copy of the $L$-irrep, and write
\begin{equation}
A(X) = \sum_{(L,r) \in \mathcal{I}} \sum_{m=-L}^{L} a^{(r)}_{L,m}(X)\, B^{(r)}_{L,m},
\label{eq:tensor_basis}
\end{equation}
with the equivariant coefficients
\[
a^{(r)}_{L,m}(X) = \sum_{f=1}^{F_L} w_{L,r,f}\, \phi^{(L)}_{f,m}(X).
\]
Here $B^{(r)}_{L,m} \in \mathbb{R}^{6\times6}_{\text{sym}}$ are \emph{fixed, input-independent} basis matrices for the $r$-th copy of the $L$-irrep in $\mathrm{Sym}^2(\rho_c)$, computed once from the Clebsch-Gordan decomposition. All input dependence resides in the equivariant coefficients $a^{(r)}_{L,m}(X)$. Under rotation, the coefficient vector $(a^{(r)}_{L,m})_{m=-L}^{L}$ and the basis $(B^{(r)}_{L,m})_{m=-L}^{L}$ transform by the same Wigner-$D^{(L)}$ representation, so their contraction yields $A(R\cdot X) = \rho_c(R)\,A(X)\,\rho_c(R)^\top$. This explicit tensor basis construction guarantees that any choice of weights $w_{L,r,f}$ preserves the equivariance property, providing hard-constrained geometric consistency rather than soft-regularized approximation. We note that while we rely on spherical tensor products, the orthogonal ICT decomposition matrices \cite{shao2025high} provide an equivalent and highly efficient basis for higher-order Cartesian tensors, which may offer computational advantages for future extensions to rank-4 tensors like elasticity.

Our architecture implements an E(3)-equivariant neural network using the \texttt{e3nn} library, following the established paradigm of equivariant message passing. The implementation proceeds through two key stages. First, an equivariant message passing backbone processes the atomic structure to produce latent features transforming under mixed irreps up to $\ell_{max}=4$. Second, an equivariant linear layer—implemented via a fourth-order Cartesian tensor with symmetry $ijkl=jikl=ijlk=klij$—assembles these latent features into the symmetric block structure of $A$, ensuring consistency with the $\mathrm{Sym}^2(\rho_c)$ representation while automatically filtering to the $l=0,2,4$ components required for rank-4 covariance output. The covariance head uses a residual connection $A = A_{\text{base}} \cdot I + \Delta A$, where $A_{\text{base}}$ provides an isotropic baseline and $\Delta A$ captures the anisotropic uncertainty structure learned from the data.

We employ an end-to-end joint optimization strategy, where both the mean and covariance heads are trained simultaneously. The inherent stability of our matrix-exponential mapping and the LE-ESO loss eliminates the need for gradient detachment, allowing the backbone to learn geometric features that are mutually informative for both point prediction and uncertainty quantification. This design guarantees that the raw network output $A$ naturally possesses the correct equivariance properties, setting the stage for positive-definite covariance construction.

\subsection{Positive-Definite Covariance Construction and Training Objective}

\begin{figure*}[t]
	\centering
	\includegraphics[width=0.9\textwidth]{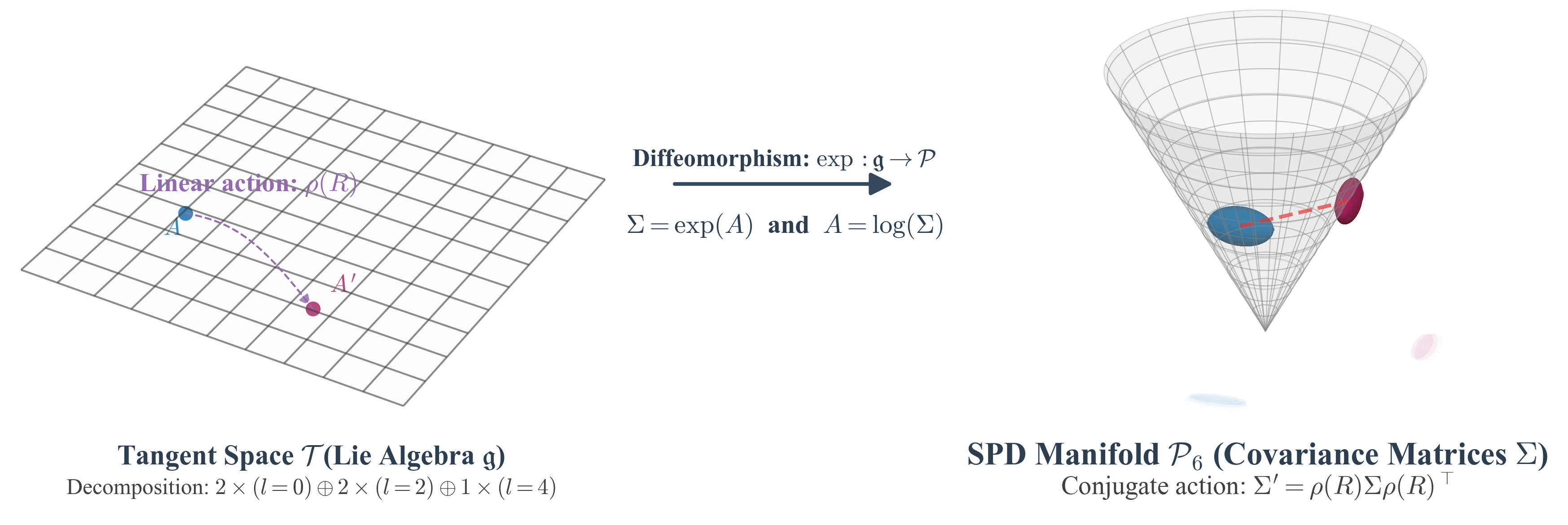}
	\vspace{-1em}
	\caption{\textbf{Geometric interpretation of equivariant covariance construction.} The network operates in the Lie algebra $\mathfrak{g} \cong \mathbb{R}_{\mathrm{sym}}^{6\times6}$ (left), producing $A(X)$. The exponential map $\Sigma=\exp(A)$ (center) projects onto the SPD manifold $\mathcal{P}_6$ (right), guaranteeing valid uncertainties while preserving $E(3)$-equivariance.}
	\label{fig:manifold_concept}
\end{figure*}

\paragraph{From Curved Manifold to Flat Tangent Space.}
To strictly enforce the SPD constraint while maintaining equivariance, we leverage the Log-Euclidean framework~\cite{arsigny2006log} and parameterize $\Sigma(X)$ via the matrix exponential mapping:
\begin{equation}
\Sigma(X) = \exp(A(X)),
\label{eq:sigma_exp}
\end{equation}
We thus optimize within the tangent space $\mathfrak{sym}(6)$---a flat Euclidean vector space at the identity---and the matrix exponential lifts $A(X)$ to the SPD manifold $\mathcal{P}_6$ while preserving $E(3)$-equivariance, since $\rho_c(R)$ acts by orthogonal conjugation (Figure~\ref{fig:manifold_concept}).

\paragraph{Log-Euclidean Equivariant Scoring Objective (LE-ESO).}
\label{subsec:le_eso}
Given the Kelvin-Mandel residual $\Delta\mathbf{c} = \mathbf{c}_{\text{KM}} - \boldsymbol{\mu}_{\text{KM}}(X)$, we define the Log-Euclidean Mahalanobis distance
\begin{equation}
D_M(A, \Delta\mathbf{c}) = \sqrt{\Delta\mathbf{c}^\top \exp(-A) \Delta\mathbf{c}}.
\label{eq:dm_def}
\end{equation}
To control the influence of extreme outliers, we use a robustified distance $\tilde{D}_M$ with transition threshold $\tau$:
\begin{equation}
\tilde{D}_M = \begin{cases}
D_M, & D_M < \tau,\\
\tau + \log(1 + D_M - \tau), & D_M \geq \tau.
\end{cases}
\label{eq:robust_dm}
\end{equation}
The final LE-ESO objective combines uncertainty volume regularization with the (robustified) data-fit term:
\begin{equation}
\mathcal{L}_{\text{LE-ESO}} = \alpha \operatorname{Tr}(A) + \tilde{D}_M,
\label{eq:stable_loss}
\end{equation}
where $\alpha > 0$ controls the trade-off between uncertainty volume and data fit.

When $\alpha = 1$ and $\tilde{D}_M = D_M$, this is the multivariate Laplace negative log-likelihood in Log-Euclidean form, a strictly proper scoring rule on $\mathfrak{sym}(6)$~\cite{gniting2007strictly}. With log-tail robustification or $\alpha \neq 1$, the objective is a \textit{robust surrogate scoring objective}, trading strict propriety for outlier stability. We set $\alpha = 1$ in our primary experiments; Appendix~\ref{app:alpha-sensitivity} reports a validation sweep over $\alpha \in \{0.03, 0.1, 0.3, 1.0\}$ showing stable training, with $\alpha=1$ also giving the lowest validation MAE. Detailed derivation and gradient analysis are in Appendix~\ref{app:stable_loss}.

\paragraph{Training Stability via Joint Optimization.}
In our experiments, the Lie algebra parameterization combined with eigenvalue clamping and the log-tail robustification in Eq.~\ref{eq:robust_dm} keeps gradients and the matrix exponential numerically well-behaved during training, enabling stable end-to-end joint optimization of both the mean and covariance heads without gradient detachment, so the backbone can receive informative gradients from the uncertainty quantification objective. This joint training paradigm allows the learned geometric features to be optimized for both prediction accuracy and uncertainty calibration, without sacrificing numerical stability or equivariance guarantees.

\section{Experiments}
\label{sec:experiments}

\begin{figure*}[t]
	\centering
	\includegraphics[width=0.95\textwidth]{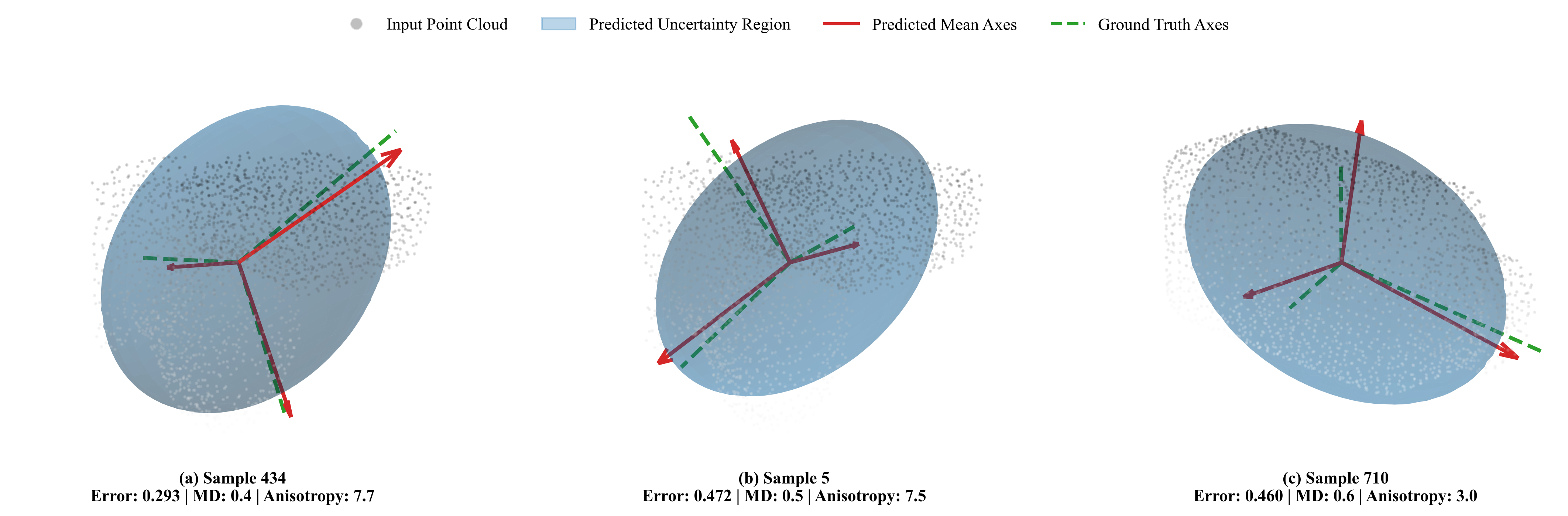}
	\caption{\textbf{3D uncertainty visualization on ModelNet40.} (Top) Point clouds with uncertainty ellipsoids (blue) and principal axes: predicted (red) vs ground truth (green). (Bottom) Anisotropic covariance matrices. Samples show varying errors (0.293--0.472) and Mahalanobis distances (0.37--0.50), demonstrating geometric-adaptive uncertainty that rotates with the object shape.}
	\label{fig:uncertainty_3d}
\end{figure*}

We evaluate the proposed framework in two main settings: (i) controlled geometric validation on ModelNet40 inertia tensors \cite{wu20153d}, and (ii) real-data dielectric tensor prediction on the Materials Project \cite{barroso2024open, jain2013commentary}. These main experiments are complemented by additional studies in Appendix~\ref{app:additional-experiments}, including ModelNet40 shape-covariance prediction (Appendix~\ref{app:shape-covariance}), rank-4 elasticity tensor prediction (Appendix~\ref{app:elasticity}), runtime profiling (Appendix~\ref{app:runtime}), and sensitivity to the LE-ESO weight $\alpha$ (Appendix~\ref{app:alpha-sensitivity}). Dataset statistics are detailed in Appendix~\ref{app:datasets}.

We compare our equivariant full-covariance model against two primary baselines: (i) a deterministic model trained with MSE, and (ii) a diagonal UQ model assuming independent components. For ablation analysis, we evaluate non-equivariant and non-SPD variants (Section~\ref{subsec:equivariance}). All estimators utilize an E(3)-equivariant backbone with $\ell_{max}=4$ to support rank-4 covariance output; detailed hyperparameters and training protocols are provided in Appendix~\ref{app:implementation}.

Our experiments verify three central hypotheses: (1) geometric validation—exact equivariance preservation on complex 3D shapes; (2) accuracy—maintained point-prediction performance while modeling full covariance structure; and (3) calibration—covariance matrices that reflect true error distributions while respecting underlying geometry.

\subsection{Controlled Geometric Validation: Inertia Tensor Prediction}
\label{subsec:synthetic}

The ModelNet40 experiments are intended as controlled geometric validation rather than as replacements for closed-form tensor estimators. For inertia tensors, analytic formulas exist; our goal is to isolate whether the learned covariance remains equivariant, SPD, and geometrically meaningful under controlled point-cloud perturbations. The inertia tensor $\mathcal{I}$ transforms equivariantly under rotation ($\mathcal{I}' = R \mathcal{I} R^\top$), making it a clean target for verifying geometric consistency. We use the official split of 12,311 CAD models and introduce aleatoric uncertainty via Gaussian jitter applied to point positions (detailed preprocessing in Appendix~\ref{app:datasets}). For this validation task, the mean and covariance heads are trained jointly without gradient detachment, as the synthetic noise is well-behaved. We provide a second controlled rank-2 tensor validation on ModelNet40 shape covariance in Appendix~\ref{app:shape-covariance}, showing that the construction is not tied to the inertia target.

We quantify equivariance error using the relative Frobenius norm $E_{\text{equiv}} = \|\Sigma(R\!\cdot\! X) - \rho_c(R)\Sigma(X)\rho_c(R)^\top\|_F / \|\Sigma(X)\|_F$. Prediction accuracy and uncertainty scores are reported in Table~\ref{tab:modelnet}; equivariance and SPD-validity results are analyzed separately in Table~\ref{tab:equivariance}, where the equivariance errors remain at the level of $10^{-7}$, confirming near-machine-precision symmetry preservation. Our full-covariance model reduces MAE by 15\% relative to the diagonal baseline while maintaining perfect SPD properties ($>$99.9\% validity, median condition number 6.8). Detailed SPD analysis is provided in Appendix~\ref{app:modelnet_vis}.

\begin{table}[htbp]
\centering
\caption{ModelNet40 inertia tensor prediction. Our equivariant full-covariance framework achieves strong performance with exact geometric consistency and physical constraints.}
\label{tab:modelnet}
\footnotesize
\begin{sc}
\setlength{\tabcolsep}{3pt}
\begin{tabular}{lcccc}
\toprule
Method & MAE & RMSE & LE-ESO & SPD \\
\midrule
Ours (Full Cov) & $\mathbf{0.078}$ & $\mathbf{0.128}$ & $\mathbf{10.66}$ & \cmark \\
Diagonal Cov & 0.092 & 0.156 & 12.84 & \cmark \\
Deterministic (MSE) & 0.083 & 0.135 & N/A & \xmark \\
\bottomrule
\end{tabular}
\end{sc}
\end{table}

Figure~\ref{fig:uncertainty_3d} provides visual validation that our uncertainty estimates are physically meaningful: uncertainty ellipsoids align with principal shape axes (demonstrating E(3)-equivariance), expand in regions with sparse point density (capturing sampling ambiguity), and preserve tensorial correlations across components.

\subsection{Application: Dielectric Tensor Prediction}
\label{subsec:application}

\begin{figure*}[htbp]
	\centering
	\includegraphics[width=0.8\textwidth]{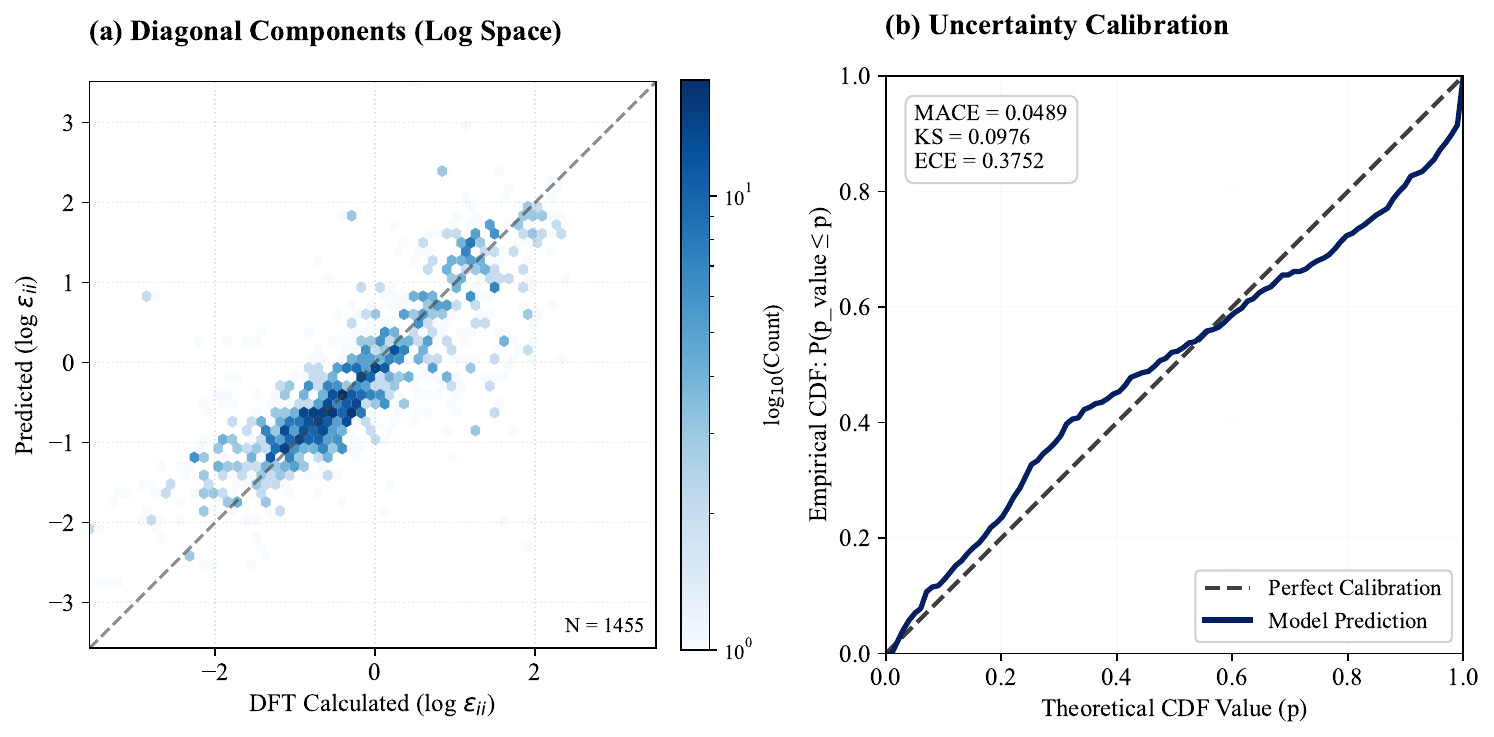}
	\caption{\textbf{Performance on Materials Project.} (a) Parity plot of diagonal components ($R^2=0.659$). (b) Multivariate Laplace reliability diagram (MACE$=0.0489$).}
	\label{fig:accuracy_main}
\end{figure*}

\begin{table*}[t]
	\caption{Performance on Materials Project dielectric dataset. GoeCTP \cite{hua2026goectp} employs a scalable equivariant architecture. DTNet \cite{mao2024dielectric} uses universal potential embeddings. MACE-Ens. is a 5-model deep ensemble. Ours (Calib.) applies temperature scaling ($T\approx0.05$). ES generalizes CRPS to multivariate settings.}
	\label{tab:results_mp}
	\begin{center}
		\begin{small}
			\begin{sc}
				\begin{tabular}{lccccc}
					\toprule
					Method & Type & MAE ($\downarrow$) & LE-ESO ($\downarrow$) & ES ($\downarrow$) & SPD? \\
					\midrule
					SEConv \cite{heilman2024equivariant} & Point & 4.702 & N/A & N/A & N/A \\
					DTNet  \cite{mao2024dielectric} & Point & 1.91 & N/A & N/A & N/A \\
					GoeCTP \cite{hua2026goectp} & Point & \textbf{1.41} & N/A & N/A & N/A \\
					Deterministic (MACE)            & Point & 2.10 & N/A & N/A & N/A \\
					\midrule
					MACE Ensemble ($N=5$)           & UQ    & \textbf{1.96} & -2.55 & \textbf{0.78} & $\approx$ \\
					Diagonal UQ                     & UQ    & 2.25 & -1.85 & 0.87 & \checkmark \\
					\midrule
					\textbf{Ours (Full Cov.)}       & \textbf{UQ} & \textbf{1.55} & -2.43 & 0.87 & \textbf{\checkmark} \\
					\textbf{Ours (Calibrated)}      & \textbf{UQ} & \textbf{1.55} & \textbf{-2.61} & \textbf{0.66} & \textbf{\checkmark} \\
					\bottomrule
				\end{tabular}
			\end{sc}
		\end{small}
	\end{center}
	\vskip -0.1in
\end{table*}

\begin{figure*}[t]
	\centering
	\includegraphics[width=0.8\textwidth]{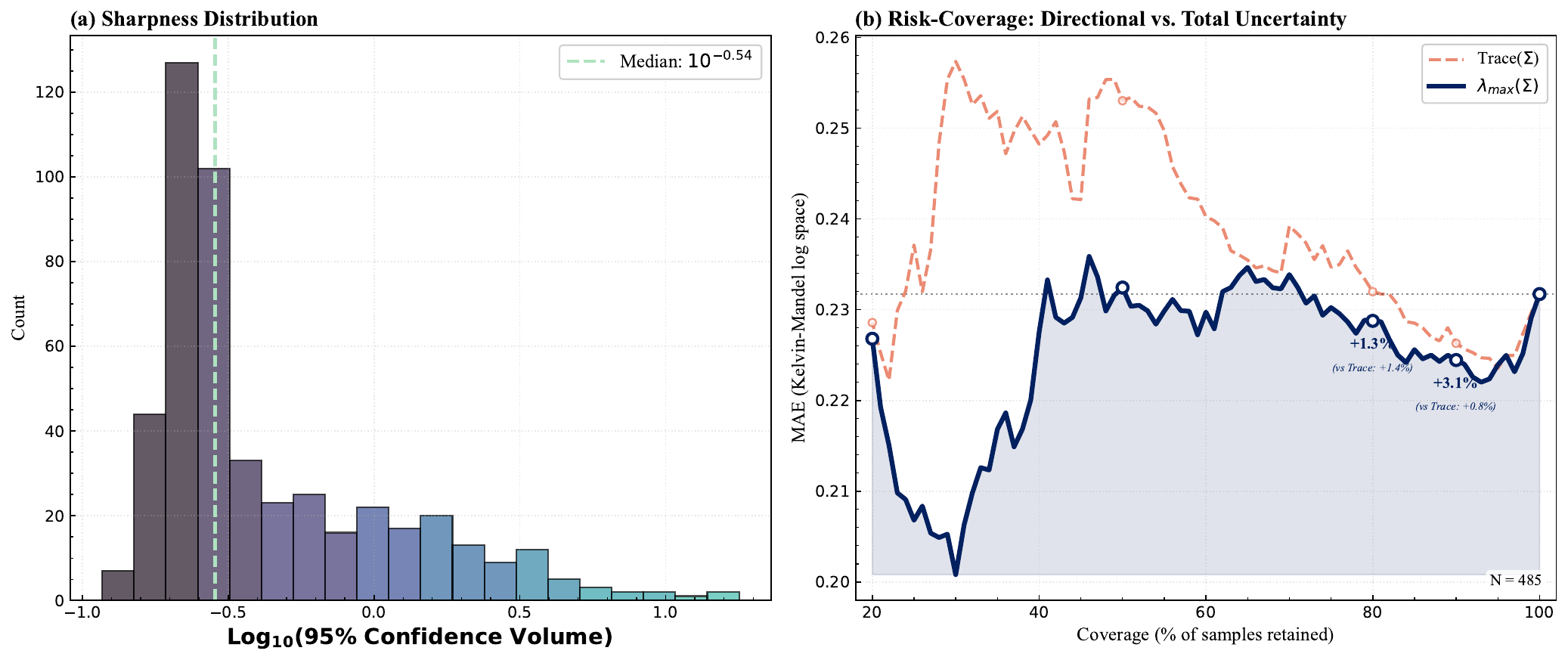}
	\caption{\textbf{Uncertainty diagnostics and utility analysis.} (a) Sharpness distribution of 95\% confidence volumes, demonstrating the model's ability to assign heteroscedastic uncertainties. (b) Risk-coverage analysis comparing ranking by directional uncertainty ($\lambda_{\max}(\Sigma)$) and total uncertainty ($\mathrm{Tr}(\Sigma)$). Directional uncertainty provides a modest but consistent advantage in identifying high-error samples.}
	\label{fig:uq_utility}
\end{figure*}

\paragraph{Dataset and Preprocessing.}
We utilize the Materials Project dielectric tensor dataset \cite{barroso2024open, jain2013commentary} with static dielectric tensors computed via DFPT. To ensure data quality and consistency, we apply systematic filtering criteria (detailed in Appendix~\ref{app:datasets}), including structure size constraints ($3 \leq \text{atoms} \leq 30$), SPD positive-definiteness verification, and value range constraints. We apply Matrix Log-Normalization to reduce the dynamic range of dielectric tensors before converting them to Kelvin--Mandel coordinates, ensuring that the six independent components are scaled consistently across diverse crystal structures. We process atomic structures into graphs with $5.0$\AA{} cutoff. Architecture and training details are provided in Appendix~\ref{app:implementation}.

\subsection{Prediction Accuracy}
\label{subsec:accuracy}

Our goal is not state-of-the-art point prediction alone, but competitive accuracy with full-covariance, symmetry-preserving uncertainty. Table~\ref{tab:results_mp} shows our method achieves competitive MAE among UQ models (1.55, vs.\ 1.96 for the MACE deep ensemble and 2.25 for diagonal UQ), and remains close to deterministic point predictors such as DTNet~\cite{mao2024dielectric} (1.91) and GoeCTP~\cite{hua2026goectp} (1.41), which do not provide calibrated equivariant covariance estimates. The primary contribution is the principled, backbone-agnostic UQ mechanism, not a new point estimator. Modeling the full covariance manifold does not hinder mean estimation; the gains are most visible on off-diagonal components, which capture anisotropic directional dependencies.

Crucially, the parity plot in Figure~\ref{fig:accuracy_main}a demonstrates that the high $R^2$ in Kelvin-Mandel log-space suggests that the E(3)-equivariant backbone effectively captures the underlying physics of dielectric properties while the UQ branch provides necessary aleatoric regularization. The predictive covariance is SPD by construction due to the matrix exponential. Separately, we also check whether the predicted mean dielectric tensors satisfy the expected positive-definiteness of the physical tensor; all test-set mean predictions satisfy this constraint in our run.

\paragraph{Extensibility to Higher-Order Tensors.}
To test extensibility beyond rank-2, Appendix~\ref{app:elasticity} reports a real-data rank-4 elasticity experiment, where the mean target is a rank-4 elasticity tensor with 21 independent components under standard minor/major symmetries. This is supporting evidence rather than a comprehensive rank-4 benchmark; the model achieves competitive MAE, improves empirical coverage from $\sim$35\% for a naive UQ baseline to $\sim$52\%, and preserves numerical equivariance and SPD validity.

\subsection{Uncertainty Calibration}
\label{subsec:calibration}

A primary contribution of our work is the calibration of tensor-valued uncertainty. Our model is trained using the Multivariate Laplace negative log-likelihood via the LE-ESO objective (Eq.~\ref{eq:stable_loss}), which naturally accounts for the heavy-tailed error distributions common in materials data. Figure~\ref{fig:accuracy_main}b shows the reliability diagram evaluated against this same Multivariate Laplace distribution, confirming that our training objective and calibration assessment are properly matched.

The model achieves a MACE of 0.0489, indicating good agreement between predicted and empirical confidence levels under the multivariate Laplace evaluation protocol. Compared with the Gaussian-NLL objective in Table~\ref{tab:loss_ablation}, LE-ESO gives lower calibration error and better accuracy on this dataset. While the curve remains slightly below the diagonal in the high-confidence regime, this indicates that the model is \textit{conservative} (under-confident), which is preferable for high-stakes materials screening as it avoids over-optimistic predictions.

\paragraph{Utility for Risk-Informed Decision Making.}
To evaluate the utility of our equivariant covariance tensors for risk-informed decision making, we perform a risk-coverage analysis comparing two ranking metrics: the total uncertainty ($\text{Trace}(\Sigma)$) and the directional uncertainty ($\lambda_{\max}(\Sigma)$). As illustrated in Figure~\ref{fig:uq_utility}b, the maximum eigenvalue $\lambda_{\max}$---representing the variance along the most uncertain principal axis---serves as a more targeted indicator of directional prediction risk, capturing failure modes that are partially obscured by scalar total uncertainty.

At 90\% coverage, ranking by $\lambda_{\max}(\Sigma)$ improves retained-set MAE by 3.1\% relative to the full test set, compared with a 0.8\% improvement when ranking by $\text{Trace}(\Sigma)$. When compared directly against Trace-based ranking under the same retained-set protocol, the advantage of $\lambda_{\max}$ is smaller but consistent, and persists at lower coverage levels where Trace-based ranking can fall below the full-dataset baseline. Appendix~\ref{app:risk-coverage} provides the full retained-set comparison, including the diagonal-UQ baseline. These results indicate that for anisotropic physical properties like dielectric tensors, capturing the directional components of uncertainty is informative for identifying potential failure modes beyond what isotropic or diagonal approximations expose.

\paragraph{Ablation of Training Objectives.}
This ablation isolates the effect of the scoring objective while keeping the equivariant backbone and matrix-exponential covariance head fixed. We compare LE-ESO against the standard Gaussian NLL and Multivariate Energy Score (ES) training. As summarized in Table~\ref{tab:loss_ablation}, the Laplace-based LE-ESO achieves the lowest MAE (1.55) and calibration error (MACE 0.049, ES 0.66). The performance gap relative to Gaussian NLL (MAE 1.78) is consistent with the heavy-tailed nature of materials residuals, where the linear Mahalanobis penalty of the Laplace formulation is less sensitive to extreme deviations than the quadratic Gaussian penalty. Energy Score training is competitive on calibration but exhibits higher MAE (1.64). Overall, the results suggest that the Laplace-based LE-ESO objective provides a better accuracy-calibration trade-off than Gaussian NLL or Energy Score training in this dataset.

\begin{table}[H]
\centering
\caption{\textbf{Ablation of training objectives on Materials Project.} All variants use the same E(3)-equivariant backbone and matrix-exponential head. LE-ESO (Ours) demonstrates superior robustness to heavy-tailed noise compared to Gaussian NLL.}
\label{tab:loss_ablation}
\vskip 0.15in
\begin{small}
\begin{sc}
\begin{tabular}{lccc}
\toprule
Training Objective & MAE ($\downarrow$) & MACE ($\downarrow$) & ES ($\downarrow$) \\
\midrule
Gaussian NLL       & 1.78 & 0.092 & 1.05 \\
Energy Score (ES)  & 1.64 & 0.054 & 0.81 \\
\textbf{LE-ESO (Ours)} & \textbf{1.55} & \textbf{0.049} & \textbf{0.66} \\
\bottomrule
\end{tabular}
\end{sc}
\end{small}
\vskip -0.1in
\end{table}

\begin{table*}[t]
	\centering
	\caption{Equivariance and SPD-validity analysis. Baseline~B$'$ isolates the Cholesky covariance failure mode (equivariant $\mu$, non-equivariant $\Sigma$). Only the matrix-exponential head achieves both covariance equivariance and strict SPD validity.}
	\label{tab:equivariance}
	\vskip 0.05in
	\begin{small}
		\begin{sc}
			\setlength{\tabcolsep}{4pt}
			\begin{tabular}{lccccc}
				\toprule
				Method & $\mathcal{E}_\mu$ & $\mathcal{E}_\Sigma$ & Mean equiv.? & Cov. equiv.? & SPD? \\
				\midrule
				Baseline A (non-equivariant GNN + Cholesky)      & $1.36 \times 10^{0}$    & $1.07 \times 10^{0}$    & \xmark & \xmark & \cmark \\
				Baseline B (E3NN + coordinate-wise heads)        & $1.28 \times 10^{0}$    & $1.03 \times 10^{0}$    & \xmark & \xmark & \cmark \\
				Baseline B$'$ (equivariant mean + Cholesky cov.) & $1.43 \times 10^{-6}$   & $4.30 \times 10^{-1}$   & \cmark & \xmark & \cmark \\
				Baseline C (E3NN + direct equivariant cov.)      & $7.59 \times 10^{-7}$   & $4.76 \times 10^{-7}$   & \cmark & \cmark & \xmark \\
				\textbf{Ours (E3NN + matrix exp.)}               & $\mathbf{2.39 \times 10^{-7}}$ & $\mathbf{2.75 \times 10^{-7}}$ & \cmark & \cmark & \cmark \\
				\bottomrule
			\end{tabular}
		\end{sc}
	\end{small}
	\vskip -0.05in
\end{table*}

\paragraph{Chemical Out-of-Distribution Analysis.}
Beyond internal calibration, a key utility of symmetry-preserving UQ is identifying Out-of-Distribution (OOD) samples during materials screening. We perform a \textit{Chemical Substitution Analysis} by replacing common atoms in the test set with unseen elements (e.g., Actinides U, Pu; Rare Earths Gd, Sm). As shown in Figure~\ref{fig:ood_chem}, the predicted directional uncertainty $\lambda_{\max}$ rises monotonically from 2.71 to 5.23 (+93.2\%) as the substitution ratio reaches 100\%, suggesting that the covariance head captures patterns correlated with chemical distribution shift; we do not claim to disentangle aleatoric and epistemic uncertainty in this analysis.

\begin{figure}[htbp]
\centering
\includegraphics[width=0.48\textwidth]{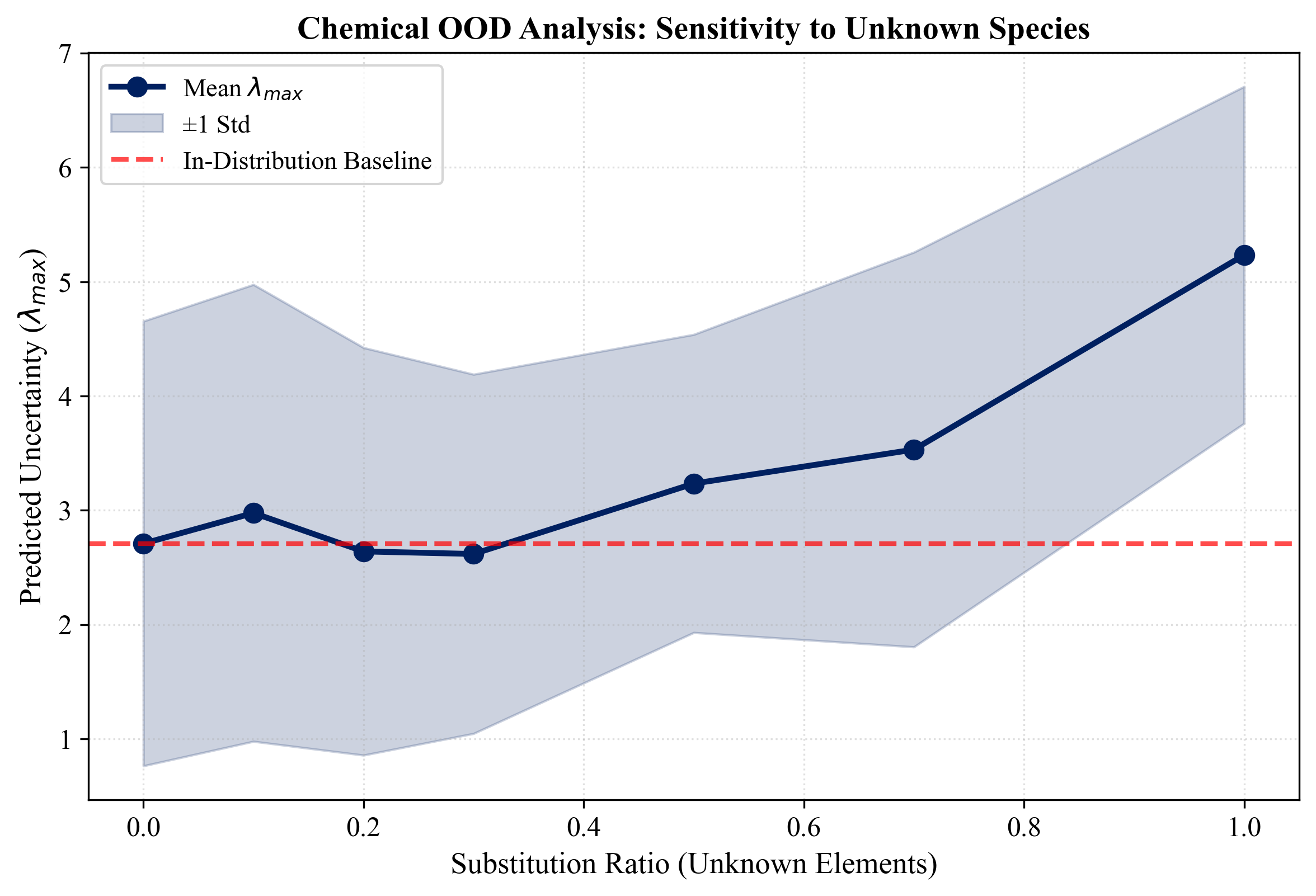}
\caption{\textbf{Chemical OOD sensitivity analysis.} The predicted uncertainty increases as the crystal lattice is populated with unseen chemical species, suggesting that the model provides a useful distribution-shift risk signal.}
\label{fig:ood_chem}
\end{figure}

\subsection{Equivariance and Validity Verification}
\label{subsec:equivariance}

Table~\ref{tab:equivariance} isolates equivariance and SPD validity across four baselines: A (non-equivariant GNN + Cholesky), B (E3NN + coordinate-wise heads), B$'$ (our equivariant mean head + Cholesky covariance), and C (direct equivariant regression of $A(X)$ without the matrix exponential). Implementation details are in Appendix~\ref{app:ablation-design}.

Baseline~B$'$ reaches near-machine-precision $\mathcal{E}_\mu\!\approx\!1.4\times10^{-6}$ but $\mathcal{E}_\Sigma\!\approx\!0.43$, confirming that the failure is specific to Cholesky: its lower-triangular structure is not preserved under orthogonal conjugation by $\rho_c(R)$. Only the matrix-exponential head achieves $O(10^{-7})$ equivariance error with strict SPD validity.

\paragraph{Computational Overhead.}
Appendix~\ref{app:runtime} reports runtime profiling. The full-covariance model is more expensive than the deterministic baseline because of the covariance branch and its backpropagation, but it provides the full anisotropic covariance in a single forward pass without ensembling.

\section{Discussion}
\label{sec:discussion}

Our primary validated setting is E(3)-equivariant UQ for symmetric rank-2 tensors. 
The additional shape-covariance experiment in Appendix~\ref{app:shape-covariance} 
shows that the construction is not specific to the inertia target, while the rank-4 
elasticity experiment in Appendix~\ref{app:elasticity} provides supporting evidence 
for higher-order tensor targets. However, the higher-order empirical study is not 
yet exhaustive, and broader validation across additional equivariant backbones, 
tensor orders, and symmetry groups remains future work. The construction separates 
a group-agnostic SPD/UQ core---namely, the matrix-exponential mapping and the 
Log-Euclidean scoring objective---from a representation-specific equivariant 
parameterization of the symmetric operator $A(X)$. Extending the framework to other 
groups therefore depends on the availability of suitable representation-theoretic 
bases and implementation tools.

\section*{Acknowledgements}

This work was supported by the National Natural Science Foundation of China 
(Grant No.~62573132) and the Fundamental Research Funds for the Central Universities 
(Grant No.~2025SMECP012).

\section*{Impact Statement}

This paper introduces a framework for equivariant uncertainty quantification in 
tensor-valued geometric learning. Its primary potential impact lies in scientific 
machine learning, particularly in materials discovery tasks where tensor-valued 
properties such as dielectric or elastic responses are important. By providing 
symmetry-preserving uncertainty estimates, the method may support more reliable 
and risk-aware computational screening, helping researchers prioritize candidates 
for further validation. At the same time, predictions from such models should not 
be treated as substitutes for experimental or domain-expert verification. Responsible 
use of the framework requires human-in-the-loop assessment, careful calibration 
checks, and validation on the target scientific domain.

\bibliography{references}

@article{fung2021benchmarking,
  title={Benchmarking graph neural networks for materials chemistry},
  author={Fung, Victor and Zhang, Jiaxin and Juarez, Eric and Sumpter, Bobby G.},
  journal={npj Computational Materials},
  volume={7},
  number={1},
  pages={84},
  year={2021},
  publisher={Nature Publishing Group UK London}
}

@article{arsigny2006log,
  title={Log-Euclidean metrics for fast and simple calculus on diffusion tensors},
  author={Arsigny, Vincent and Fillard, Pierre and Pennec, Xavier and Ayache, Nicholas},
  journal={Magnetic Resonance in Medicine},
  volume={56},
  number={2},
  pages={411--421},
  year={2006},
  publisher={Wiley Online Library}
}

@article{heilman2024equivariant,
  title={Equivariant graph neural networks for prediction of tensor material properties of crystals},
  author={Heilman, Alex and Schlesinger, Claire and Yan, Qimin},
  journal={arXiv preprint arXiv:2406.03563},
  year={2024}
}

@article{pakornchote2023straintensornet,
  title={StrainTensorNet: Predicting crystal structure elastic properties using SE(3)-equivariant graph neural networks},
  author={Pakornchote, Teerachote and Ektarawong, Annop and Chotibut, Thiparat},
  journal={Physical Review Research},
  volume={5},
  number={4},
  pages={043198},
  year={2023},
  publisher={APS}
}

@article{reiser2022graph,
  title={Graph neural networks for materials science and chemistry},
  author={Reiser, Patrick and Neubert, Marlen and Eberhard, Andr{\'e} and Torresi, Luca and Zhou, Chen and Shao, Chen and Metni, Houssam and van Hoesel, Clint and Schopmans, Henrik and Sommer, Timo and others},
  journal={Communications Materials},
  volume={3},
  number={1},
  pages={93},
  year={2022},
  publisher={Nature Publishing Group UK London}
}

@article{olivier2021bayesian,
  title={Bayesian neural networks for uncertainty quantification in data-driven materials modeling},
  author={Olivier, Audrey and Shields, Michael D. and Graham-Brady, Lori},
  journal={Computer methods in applied mechanics and engineering},
  volume={386},
  pages={114079},
  year={2021},
  publisher={Elsevier}
}

@inproceedings{pouliquen2025schur,
  title={Schur's Positive-Definite Network: Deep Learning in the SPD cone with structure},
  author={Pouliquen, Can and Massias, Mathurin and Vayer, Titouan},
  booktitle={International Conference on Learning Representations},
  volume={2025},
  pages={71401--71416},
  year={2025}
}

@article{rensmeyer2024high,
  title={High accuracy uncertainty-aware interatomic force modeling with equivariant Bayesian neural networks},
  author={Rensmeyer, Tim and Craig, Ben and Kramer, Denis and Niggemann, Oliver},
  journal={Digital Discovery},
  volume={3},
  number={11},
  pages={2356--2366},
  year={2024},
  publisher={Royal Society of Chemistry}
}

@article{zhou2024bi,
  title={Bi-eqno: generalized approximate Bayesian inference with an equivariant neural operator framework},
  author={Zhou, Xu-Hui and Liu, Zhuo-Ran and Xiao, Heng},
  journal={arXiv preprint arXiv:2410.16420},
  year={2024}
}

@inproceedings{steinert2025integration,
  title={Integration-free Kernels for Equivariant Gaussian Process Modelling},
  author={Steinert, Tim and Ginsbourger, David and Lykke-M{\o}ller, August and Christiansen, Ove and Moss, Henry},
  booktitle={Forty-second International Conference on Machine Learning},
  year={2025}
}

@article{berman2025uncertainty,
  title={{On Uncertainty Calibration for Equivariant Functions}},
  author={Berman, Edward and Ginesin, Jacob and Pacini, Marco and Walters, Robin},
  journal={Transactions on Machine Learning Research},
  year={2026},
  url={https://mlanthology.org/tmlr/2026/berman2026tmlr-uncertainty/}
}

@article{batatia2022mace,
  title={MACE: Higher order equivariant message passing neural networks for fast and accurate force fields},
  author={Batatia, Ilyes and Kovacs, David P and Simm, Gregor and Ortner, Christoph and Cs{\'a}nyi, G{\'a}bor},
  journal={Advances in neural information processing systems},
  volume={35},
  pages={11423--11436},
  year={2022}
}

@inproceedings{bevanda2025koopman,
  title={Koopman-Equivariant Gaussian Processes},
  author={Bevanda, Petar and Beier, Max and Capone, Alexandre and Sosnowski, Stefan Georg and Hirche, Sandra and Lederer, Armin},
  booktitle={Proceedings of The 28th International Conference on Artificial Intelligence and Statistics},
  pages={3151--3159},
  year={2025},
  editor={Li, Yingzhen and Mandt, Stephan and Agrawal, Shipra and Khan, Emtiyaz},
  volume={258},
  series={Proceedings of Machine Learning Research},
  publisher={PMLR},
  url={https://proceedings.mlr.press/v258/bevanda25a.html}
}

@inproceedings{zhao2023modeling,
  title={Modeling graphs beyond hyperbolic: Graph neural networks in symmetric positive definite matrices},
  author={Zhao, Wei and Lopez, Federico and Riestenberg, J Maxwell and Strube, Michael and Taha, Diaaeldin and Trettel, Steve},
  booktitle={Joint European Conference on Machine Learning and Knowledge Discovery in Databases},
  pages={122--139},
  year={2023},
  organization={Springer}
}

@article{rudner2022tractable,
  title={Tractable function-space variational inference in bayesian neural networks},
  author={Rudner, Tim GJ and Chen, Zonghao and Teh, Yee Whye and Gal, Yarin},
  journal={Advances in Neural Information Processing Systems},
  volume={35},
  pages={22686--22698},
  year={2022}
}

@article{jekel2022neural,
  title={Neural network layers for prediction of positive definite elastic stiffness tensors},
  author={Jekel, Charles F and Swartz, Kenneth E and White, Daniel A and Tortorelli, Daniel A and Watts, Seth E},
  journal={arXiv preprint arXiv:2203.13938},
  year={2022}
}

@article{gruber2022better,
  title={Better uncertainty calibration via proper scores for classification and beyond},
  author={Gruber, Sebastian and Buettner, Florian},
  journal={Advances in Neural Information Processing Systems},
  volume={35},
  pages={8618--8632},
  year={2022}
}

@article{fakour2024structured,
  title={A structured review of literature on uncertainty in machine learning \& deep learning},
  author={Fakour, Fahimeh and Mosleh, Ali and Ramezani, Ramin},
  journal={arXiv preprint arXiv:2406.00332},
  year={2024}
}

@article{equer2023multi,
  title={Multi-scale message passing neural pde solvers},
  author={Equer, L{\'e}onard and Rusch, T Konstantin and Mishra, Siddhartha},
  journal={arXiv preprint arXiv:2302.03580},
  year={2023}
}

@article{newman2024stable,
  title={Stable tensor neural networks for efficient deep learning},
  author={Newman, Elizabeth and Horesh, Lior and Avron, Haim and Kilmer, Misha E},
  journal={Frontiers in Big Data},
  volume={7},
  pages={1363978},
  year={2024},
  publisher={Frontiers Media SA}
}

@article{du2024densegnn,
  title={DenseGNN: universal and scalable deeper graph neural networks for high-performance property prediction in crystals and molecules},
  author={Du, Hongwei and Wang, Jiamin and Hui, Jian and Zhang, Lanting and Wang, Hong},
  journal={npj Computational Materials},
  volume={10},
  number={1},
  pages={292},
  year={2024},
  publisher={Nature Publishing Group UK London}
}

@article{barroso2024open,
  title={Open materials 2024 (omat24) inorganic materials dataset and models},
  author={Barroso-Luque, Luis and Shuaibi, Muhammed and Fu, Xiang and Wood, Brandon M and Dzamba, Misko and Gao, Meng and Rizvi, Ammar and Zitnick, C Lawrence and Ulissi, Zachary W},
  journal={arXiv preprint arXiv:2410.12771},
  year={2024}
}

@article{doan2025bayesian,
  title={Bayesian Low-Rank Learning ({Bella}): A Practical Approach to {B}ayesian Neural Networks},
  author={Doan, Bao Gia and Shamsi, Afshar and Guo, Xiao-Yu and Mohammadi, Arash and Alinejad-Rokny, Hamid and Sejdinovic, Dino and Teney, Damien and Ranasinghe, Damith C. and Abbasnejad, Ehsan},
  journal={Proceedings of the AAAI Conference on Artificial Intelligence},
  volume={39},
  number={15},
  pages={16298--16307},
  year={2025},
  doi={10.1609/aaai.v39i15.33790},
  url={https://ojs.aaai.org/index.php/AAAI/article/view/33790}
}

@article{sheinkman2025architecture,
  title={The Architecture and Evaluation of Bayesian Neural Networks},
  author={Sheinkman, Alisa and Wade, Sara},
  journal={arXiv e-prints},
  pages={arXiv--2503},
  year={2025}
}

@article{jain2013commentary,
  title={Commentary: The Materials Project: A materials genome approach to accelerating materials innovation},
  author={Jain, Anubhav and Ong, Shyue Ping and Hautier, Geoffroy and Chen, Wei and Richards, William Davidson and Dacek, Stephen and Cholia, Shreyas and Gunter, Dan and Skinner, David and Ceder, Gerbrand and others},
  journal={APL materials},
  volume={1},
  number={1},
  year={2013},
  publisher={AIP Publishing}
}

@article{geiger2022e3nn,
  title={e3nn: Euclidean neural networks},
  author={Geiger, Mario and Smidt, Tess},
  journal={arXiv preprint arXiv:2207.09453},
  year={2022}
}

@inproceedings{wu20153d,
  title={3d shapenets: A deep representation for volumetric shapes},
  author={Wu, Zhirong and Song, Shuran and Khosla, Aditya and Yu, Fisher and Zhang, Linguang and Tang, Xiaoou and Xiao, Jianxiong},
  booktitle={Proceedings of the IEEE conference on computer vision and pattern recognition},
  pages={1912--1920},
  year={2015}
}

@article{mao2024dielectric,
  title={Dielectric tensor prediction for inorganic materials using latent information from preferred potential},
  author={Mao, Zetian and Li, Wenwen and Tan, Jethro},
  journal={npj Computational Materials},
  volume={10},
  number={1},
  pages={265},
  year={2024},
  publisher={Nature Publishing Group UK London}
}

@article{hua2026goectp,
  title={{Revisiting the Canonicalization for Fast and Accurate Crystal Tensor Property Prediction}},
  author={Hua, Haowei and Yang, Jingwen and Lin, Wanyu and Zhou, Pan},
  journal={Proceedings of the AAAI Conference on Artificial Intelligence},
  volume={40},
  number={1},
  pages={417--425},
  year={2026},
  doi={10.1609/aaai.v40i1.37004},
  url={https://ojs.aaai.org/index.php/AAAI/article/view/37004}
}

@article{shao2025high,
  title={High-Rank Irreducible Cartesian Tensor Decomposition and Bases of Equivariant Spaces},
  author={Shao, Shihao and Li, Yikang and Lin, Zhouchen and Cui, Qinghua},
  journal={Journal of Machine Learning Research},
  volume={26},
  number={175},
  pages={1--53},
  year={2025},
  url={http://jmlr.org/papers/v26/25-0134.html}
}

@article{gerken2024emergent,
  title={Emergent equivariance in deep ensembles},
  author={Gerken, Jan E and Kessel, Pan},
  journal={arXiv preprint arXiv:2403.03103},
  year={2024}
}

@article{gniting2007strictly,
  title={Strictly proper scoring rules, prediction, and estimation},
  author={Gneiting, Tilmann and Raftery, Adrian E},
  journal={Journal of the American Statistical Association},
  volume={102},
  number={477},
  pages={359--378},
  year={2007},
  publisher={Taylor \& Francis}
}

@inproceedings{kuleshov2018accurate,
  title={Accurate uncertainties for deep learning using calibrated regression},
  author={Kuleshov, Volodymyr and Fenner, Nathan and Ermon, Stefano},
  booktitle={International Conference on Machine Learning},
  pages={2796--2804},
  year={2018},
  organization={PMLR}
}
\bibliographystyle{icml2026}

\newpage
\appendix
\onecolumn

\section{Theoretical Proofs and Derivations}
\label{app:proofs}

In this section, we provide formal statements and proofs establishing the mathematical validity of our equivariant uncertainty formulation. We establish the theoretical foundations for (i) the representation-theoretic decomposition of covariance tensors, (ii) the matrix exponential construction ensuring both positive-definiteness and equivariance, and (iii) the numerical stability of our loss formulation.

\subsection{Rotation Matrices in Kelvin-Mandel Space}
\label{app:rotation_matrices}

For a rotation matrix $R \in SO(3)$, the corresponding $6 \times 6$ transformation matrix $\rho_c(R)$ in Kelvin-Mandel space can be derived from the Kronecker product structure. The vectorization operation $\text{vec}(C)$ maps a symmetric tensor $C$ to a 9-dimensional vector, and under rotation:
\begin{equation}
\text{vec}(C') = (R \otimes R) \text{vec}(C),
\end{equation}
where $\otimes$ denotes the Kronecker product.

The matrix $\rho_c(R)$ is obtained by projecting $R \otimes R$ onto the 6-dimensional symmetric subspace and applying the Kelvin-Mandel scaling matrix $\mathbf{P}$:
\begin{equation}
\rho_c(R) = \mathbf{P} \cdot \mathcal{S} \cdot (R \otimes R) \cdot \mathcal{S}^T \cdot \mathbf{P}^{-1},
\label{eq:rho_c_projection}
\end{equation}
where $\mathcal{S}$ is the $6 \times 9$ selection matrix.

For practical computation, $\rho_c(R)$ has the explicit block structure:
\begin{equation}
\footnotesize
\rho_c(R) = \begin{bmatrix}
R_{11}^2 & R_{12}^2 & R_{13}^2 & \sqrt{2}R_{12}R_{13} & \sqrt{2}R_{11}R_{13} & \sqrt{2}R_{11}R_{12} \\
R_{21}^2 & R_{22}^2 & R_{23}^2 & \sqrt{2}R_{22}R_{23} & \sqrt{2}R_{21}R_{23} & \sqrt{2}R_{21}R_{22} \\
R_{31}^2 & R_{32}^2 & R_{33}^2 & \sqrt{2}R_{32}R_{33} & \sqrt{2}R_{31}R_{33} & \sqrt{2}R_{31}R_{32} \\
\sqrt{2}R_{21}R_{31} & \sqrt{2}R_{22}R_{32} & \sqrt{2}R_{23}R_{33} & R_{22}R_{33} + R_{23}R_{32} & R_{21}R_{33} + R_{23}R_{31} & R_{21}R_{32} + R_{22}R_{31} \\
\sqrt{2}R_{11}R_{31} & \sqrt{2}R_{12}R_{32} & \sqrt{2}R_{13}R_{33} & R_{12}R_{33} + R_{13}R_{32} & R_{11}R_{33} + R_{13}R_{31} & R_{11}R_{32} + R_{12}R_{31} \\
\sqrt{2}R_{11}R_{21} & \sqrt{2}R_{12}R_{22} & \sqrt{2}R_{13}R_{23} & R_{12}R_{23} + R_{13}R_{22} & R_{11}R_{23} + R_{13}R_{21} & R_{11}R_{22} + R_{12}R_{21}
\end{bmatrix}.
\label{eq:rho_c_matrix_explicit}
\end{equation}
This explicit form ensures that $\rho_c(R)$ maintains orthogonality in Kelvin-Mandel space: $\rho_c(R)^T \rho_c(R) = I_6$.

\paragraph{Note on Voigt vs. Kelvin-Mandel Notation.}
While standard Voigt notation maps $C_{ij}$ to $[C_{11}, C_{22}, C_{33}, C_{23}, C_{13}, C_{12}]^T$, it does not preserve the Frobenius norm. Kelvin-Mandel notation applies $\sqrt{2}$ scaling to shear components, ensuring $\|\mathbf{c}_{\text{KM}}\|_2 = \|C\|_F$. This isometric property is crucial for maintaining geometric consistency in uncertainty quantification.

\subsection{Irreducible Representation Decomposition}
\label{app:irrep_decomp}

\begin{proposition}[Irreducible decomposition of the covariance representation]
Let $\rho_c$ denote the 6-dimensional real representation of $\mathrm{SO}(3)$ corresponding to symmetric rank-2 tensors, i.e.
\[
\rho_c \cong l=0 \oplus l=2.
\]
Then the symmetric tensor product representation of $\rho_c$ decomposes as
\[
\mathrm{Sym}^2(\rho_c) \;\cong\; 2\times(l=0) \;\oplus\; 2\times(l=2) \;\oplus\; 1\times(l=4),
\]
which possesses $21$ independent degrees of freedom—equal to that of a symmetric $6\times6$ covariance matrix.
\end{proposition}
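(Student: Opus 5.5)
We use the standard identity for the symmetric square of a direct sum: for representations $V,W$,
\[
\mathrm{Sym}^2(V\oplus W)\;\cong\;\mathrm{Sym}^2(V)\;\oplus\;(V\otimes W)\;\oplus\;\mathrm{Sym}^2(W).
\]
To see this, write a symmetric bilinear form on $V\oplus W$ in block form. Its diagonal blocks are symmetric, and its off-diagonal block is arbitrary, with the two off-diagonal blocks being transposes of each other. The identification is $\mathrm{SO}(3)$-equivariant because the group acts block-diagonally on $V\oplus W$.

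With $V=D^{(0)}$ and $W=D^{(2)}$, as given by the decomposition $\rho_c\cong l=0\oplus l=2$ stated earlier, the three summands are handled in turn.

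\textbf{First summand.} $\mathrm{Sym}^2(D^{(0)})=D^{(0)}$, since $V$ is one-dimensional.

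\textbf{Middle summand.} $D^{(0)}\otimes D^{(2)}\cong D^{(2)}$, because tensoring with the trivial representation does nothing.

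\textbf{Last summand (the main step).} The Clebsch--Gordan rule gives
\[
D^{(2)}\otimes D^{(2)}\cong D^{(0)}\oplus D^{(1)}\oplus D^{(2)}\oplus D^{(3)}\oplus D^{(4)}.
\]
We must determine which components lie in $\mathrm{Sym}^2$ and which in $\Lambda^2$. We use the exchange symmetry of Clebsch--Gordan coefficients,
\[
\langle j\,m_1\,j\,m_2\mid L\,M\rangle=(-1)^{2j-L}\langle j\,m_2\,j\,m_1\mid L\,M\rangle .
\]
With $j=2$ this shows that the $L$-component is symmetric exactly when $L$ is even. Hence
\[
\mathrm{Sym}^2(D^{(2)})\cong D^{(0)}\oplus D^{(2)}\oplus D^{(4)},\qquad \Lambda^2(D^{(2)})\cong D^{(1)}\oplus D^{(3)} .
\]

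As an independent check that avoids citing the phase convention, we would compare characters. For rotation by angle $\theta$, the character of $\mathrm{Sym}^2$ is $\tfrac12\bigl(\chi(\theta)^2+\chi(2\theta)\bigr)$, where $\chi(\theta)=\sum_{m=-2}^{2}e^{im\theta}$. We would expand this and match weight multiplicities. Equivalently, count weights: the weights of $\mathrm{Sym}^2(D^{(2)})$ are the $m_1+m_2$ with $m_1\le m_2$. The top weight is $4$, weight $3$ occurs once, and weight $2$ occurs twice. This forces $D^{(4)}$, no $D^{(3)}$, and one $D^{(2)}$. Continuing down to weight $0$, which occurs $3$ times, forces one $D^{(0)}$ and no $D^{(1)}$.

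\textbf{Assembling and counting.} Collecting the three summands gives $2\times D^{(0)}\oplus 2\times D^{(2)}\oplus D^{(4)}$. The dimension count is
\[
2\cdot1+2\cdot5+9=21=\tfrac{6\cdot7}{2}=\dim\mathbb{R}^{6\times6}_{\mathrm{sym}}.
\]
This confirms that the decomposition is exhaustive. It also agrees with the antisymmetric remainder: $\dim\Lambda^2(\mathbb{R}^6)=15=3+7+5$. Here $D^{(1)}\oplus D^{(3)}$ comes from $\Lambda^2(D^{(2)})$, and the remaining $5$ is the antisymmetric copy of $D^{(2)}$ coming from the cross term. This explains how the paper's loose expression for $\rho_c\otimes\rho_c$ should be read.

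Finally, the action on covariances, $\Sigma\mapsto\rho_c\Sigma\rho_c^\top$, is exactly the $\mathrm{Sym}^2(\rho_c)$ action on symmetric matrices, because $\rho_c$ is orthogonal. So the decomposition applies to the $6\times6$ covariance.
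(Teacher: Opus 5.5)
Your proof is correct and follows the same route as the paper: split $\mathrm{Sym}^2(D^{(0)}\oplus D^{(2)})$ into $\mathrm{Sym}^2(D^{(0)})\oplus(D^{(0)}\otimes D^{(2)})\oplus\mathrm{Sym}^2(D^{(2)})$, identify each piece, and count dimensions to get $21$. You go further than the paper in one place: it simply asserts $\mathrm{Sym}^2(D^{(2)})\cong D^{(0)}\oplus D^{(2)}\oplus D^{(4)}$, whereas you justify this with the Clebsch--Gordan exchange phase $(-1)^{4-L}$ and confirm it with a weight count.
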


\begin{proof}
Since $\rho_c \cong l=0 \oplus l=2$, the symmetric square decomposes as
\[
\mathrm{Sym}^2(\rho_c) \cong \mathrm{Sym}^2(l=0) \oplus (l=0 \otimes l=2) \oplus \mathrm{Sym}^2(l=2).
\]
We have $\mathrm{Sym}^2(l=0) = l=0$, $l=0 \otimes l=2 = l=2$, and $\mathrm{Sym}^2(l=2) = l=0 \oplus l=2 \oplus l=4$. Combining these yields
\[
\mathrm{Sym}^2(\rho_c) \cong 2 \times (l=0) \oplus 2 \times (l=2) \oplus 1 \times (l=4),
\]
which has dimension $2\cdot1 + 2\cdot5 + 1\cdot9 = 21$.
\end{proof}

\subsection{Matrix Exponential Properties}
\label{app:exp_properties}

\begin{proposition}[Positive-definiteness and equivariance of the exponential map]
\label{prop:exp_equiv_app}
Let $A(X)\in\mathbb{R}^{6\times6}_{\mathrm{sym}}$ satisfy the equivariance condition
\[
A(R\!\cdot\! X) = \rho_c(R)\,A(X)\,\rho_c(R)^\top
\quad \forall R\in O(3).
\]
Then the matrix exponential
\[
\Sigma(X) = \exp(A(X))
\]
is (i) symmetric positive-definite for all $X$, and (ii) equivariant under the same group action:
\[
\Sigma(R\!\cdot\! X) = \rho_c(R)\,\Sigma(X)\,\rho_c(R)^\top.
\]
\end{proposition}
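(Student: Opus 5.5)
The plan is to treat the two claims separately, using only the spectral theorem for real symmetric matrices, the power-series definition $\exp(M)=\sum_{k\ge0}M^k/k!$, and the orthogonality $\rho_c(R)^\top\rho_c(R)=I_6$ recorded in Appendix~\ref{app:rotation_matrices}.

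For (i), I will fix $X$ and write $A=A(X)=Q\Lambda Q^\top$ with $Q$ orthogonal and $\Lambda=\mathrm{diag}(\lambda_1,\dots,\lambda_6)$ real. Since $(Q\Lambda Q^\top)^k=Q\Lambda^kQ^\top$ for every $k$, summing the series gives $\exp(A)=Q\,\mathrm{diag}(e^{\lambda_1},\dots,e^{\lambda_6})\,Q^\top$. Convergence is absolute, so term-by-term manipulation is legitimate. This matrix is symmetric because it is the product of $Q$, a diagonal matrix, and $Q^\top$. Its eigenvalues $e^{\lambda_i}$ are strictly positive, so $v^\top\exp(A)v=\sum_i e^{\lambda_i}(Q^\top v)_i^2>0$ for every $v\neq0$. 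This gives SPD with no assumption on $A$ beyond symmetry, and it holds uniformly in $X$.

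For (ii), I will first prove the conjugation identity $\exp(PMP^{-1})=P\exp(M)P^{-1}$ for any invertible $P$. Since $(PMP^{-1})^k=PM^kP^{-1}$, this follows from the power series, using continuity of $M\mapsto PMP^{-1}$ to pass the limit through. I then take $P=\rho_c(R)$, which is orthogonal, so $P^{-1}=\rho_c(R)^\top$. The chain of equalities is then
\begin{align*}
\Sigma(R\cdot X)&=\exp\bigl(A(R\cdot X)\bigr)\\
&=\exp\bigl(\rho_c(R)A(X)\rho_c(R)^\top\bigr)\\
&=\rho_c(R)\exp(A(X))\rho_c(R)^\top .
\end{align*}
The first step is the definition of $\Sigma$, the second is the hypothesis on $A$, and the third is the conjugation identity.

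The only point needing care is the step $\rho_c(R)^\top=\rho_c(R)^{-1}$ for all $R\in O(3)$, including reflections. For $R\in SO(3)$ this is the stated orthogonality of the explicit matrix. For $\det R=-1$ I will use $\rho_c(R)=\rho_c(-R)$, since $\rho_c$ is quadratic in the entries of $R$ and $-R\in SO(3)$. Alternatively, orthogonality follows directly from the fact that $C\mapsto RCR^\top$ preserves the Frobenius norm and the Kelvin--Mandel map is an isometry. Once this is settled, the remaining steps are routine.
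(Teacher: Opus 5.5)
Your proposal is correct and follows the paper's proof essentially step for step. For (i), the spectral decomposition $A=Q\Lambda Q^\top$ gives $\exp(A)=Q\exp(\Lambda)Q^\top$ with strictly positive eigenvalues; for (ii), the conjugation identity $\exp(SAS^{-1})=S\exp(A)S^{-1}$ with $S=\rho_c(R)$ orthogonal does the work. Beyond that, you derive the conjugation identity from the power series and check $\rho_c(R)^\top=\rho_c(R)^{-1}$ for reflections via $\rho_c(R)=\rho_c(-R)$, which fills in details the paper leaves implicit.
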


\begin{proof}
For any real symmetric $A$, there exists an orthogonal $Q$ and real diagonal $\Lambda$ such that $A=Q\Lambda Q^\top$. Then
\[
\exp(A) = Q\,\exp(\Lambda)\,Q^\top,
\]
where $\exp(\Lambda)$ has strictly positive diagonal entries $\exp(\lambda_i)>0$. Thus $\exp(A)$ is symmetric positive-definite.

For equivariance, note that $\rho_c(R)$ is orthogonal. The matrix exponential satisfies $\exp(SAS^{-1}) = S\exp(A)S^{-1}$ for any invertible $S$. Taking $S=\rho_c(R)$ gives
\[
\exp(\rho_c(R)A\rho_c(R)^\top)
 = \rho_c(R)\exp(A)\rho_c(R)^\top,
\]
which proves equivariance.
\end{proof}

\begin{proposition}[Equivariance of spectral functions]
\label{prop:spectral_equiv}
Let $f: \mathbb{R} \to \mathbb{R}$ be a scalar function. For a symmetric matrix $A$ with eigenvalue decomposition $A = Q\Lambda Q^\top$, define the spectral function $F(A) = Q\,\text{diag}(f(\lambda_1),\dots,f(\lambda_n))\,Q^\top$. Since $\rho_c(R)$ is orthogonal for all $R \in O(3)$, it follows that
\[
F(\rho_c(R)\,A\,\rho_c(R)^\top) = \rho_c(R)\,F(A)\,\rho_c(R)^\top.
\]
Thus, eigenvalue clamping and anisotropic jitter (as defined in Eq.~\ref{eq:anisotropic_jitter}) preserve $O(3)$ equivariance.
\end{proposition}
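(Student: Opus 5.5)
The plan is to prove the identity directly from the definition of $F$, using only that $\rho_c(R)$ is orthogonal for every $R\in O(3)$. That fact is recorded in Appendix~\ref{app:rotation_matrices}, and for $\det R=-1$ it also follows from $\rho_c(R)=R\otimes_s R$. Fix $R$ and write $S=\rho_c(R)$. Start from an eigendecomposition $A=Q\Lambda Q^\top$ with $Q$ orthogonal and $\Lambda$ real diagonal. Then
\[
SAS^\top = (SQ)\Lambda(SQ)^\top,
\]
and $SQ$ is orthogonal because it is a product of orthogonal matrices. So $SQ$ together with the same $\Lambda$ is a valid eigendecomposition of $SAS^\top$. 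Applying the definition of $F$ to this decomposition gives
\[
F(SAS^\top)=(SQ)\,\mathrm{diag}(f(\lambda_i))\,(SQ)^\top = S\,F(A)\,S^\top,
\]
which is the claimed identity.

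The main obstacle is well-definedness. The eigendecomposition is not unique: with repeated eigenvalues $Q$ is determined only up to orthogonal mixing within each eigenspace, and the ordering is arbitrary. We therefore must show that $F(A)$ does not depend on the choice of $Q$; otherwise the computation above only shows that one particular decomposition of $SAS^\top$ yields $SF(A)S^\top$. To handle this, write $A=\sum_{\mu}\mu P_\mu$, where $\mu$ runs over the distinct eigenvalues and $P_\mu$ is the orthogonal projector onto the $\mu$-eigenspace. Then
\[
Q\,\mathrm{diag}(f(\lambda_i))\,Q^\top=\sum_\mu f(\mu)P_\mu,
\]
because grouping the columns of $Q$ by eigenvalue gives $\sum_{i:\lambda_i=\mu} q_iq_i^\top=P_\mu$. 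The projectors $P_\mu$ depend only on $A$, so $F$ is well defined. This viewpoint also gives a second proof of equivariance: $SAS^\top$ has the same distinct eigenvalues as $A$, with eigenspace projectors $SP_\mu S^\top$. These are again orthogonal projectors since $S^\top S=I$, and they sum to the identity.

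Finally, the corollary follows by specializing $f$. Eigenvalue clamping is $f(\lambda)=\max(\lambda,\epsilon)$, or $\min(\max(\lambda,a),b)$. The anisotropic jitter of Eq.~\ref{eq:anisotropic_jitter} is a spectral map of the form $\lambda\mapsto\lambda+g(\lambda)$. In both cases the operation is $F$ for a scalar $f$, so it commutes with conjugation by $\rho_c(R)$. If the head composes such an operation with the equivariant $A(X)$ and with $\exp$ (itself the spectral function $f=\exp$, consistent with Proposition~\ref{prop:exp_equiv_app}), the composite remains $O(3)$-equivariant, because each stage commutes with $\rho_c(R)(\cdot)\rho_c(R)^\top$. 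One caveat needs checking: if the jitter depends on anything other than the eigenvalues, such as a fixed coordinate direction, it is not spectral and the argument does not apply. The proof would therefore state explicitly that Eq.~\ref{eq:anisotropic_jitter} acts only through the spectrum.
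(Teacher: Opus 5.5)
Your proof of the displayed identity is the paper's own argument: conjugate $A=Q\Lambda Q^\top$ by $\rho_c(R)$, note that $\rho_c(R)Q$ is orthogonal, and apply the definition of $F$. Your projector formula $F(A)=\sum_\mu f(\mu)P_\mu$ adds the well-definedness that the paper's proof silently assumes.

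The gap is in the corollary, at the step claiming that the anisotropic jitter of Eq.~\ref{eq:anisotropic_jitter} is a spectral map $\lambda\mapsto\lambda+g(\lambda)$. That equation sets $\tilde\lambda_i=\lambda_i+\epsilon\cdot i$. The shift is fixed by the position $i$ of the eigenvalue in the solver's output, not by its value. No scalar $f$ can reproduce it, because every scalar $f$ has $f(\lambda_i)=f(\lambda_j)$ whenever $\lambda_i=\lambda_j$. The stated purpose of the jitter is precisely to force $\tilde\lambda_i\neq\tilde\lambda_j$ on a repeated eigenvalue.

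Your own well-definedness lemma then shows what breaks. On a degenerate eigenspace, $Q\,\mathrm{diag}(\lambda_i+\epsilon i)\,Q^\top$ is not of the form $\sum_\mu h(\mu)P_\mu$, so it depends on which orthonormal basis of that eigenspace the solver returns. For $\rho_c(R)A\rho_c(R)^\top$ the solver has no reason to return $\rho_c(R)$ times that basis. The equivariance identity for the jittered map therefore generically fails by a term of order $\epsilon$. Your caveat names the right test, but your remedy (declaring that the jitter acts only through the spectrum) asserts exactly what Eq.~\ref{eq:anisotropic_jitter} contradicts. The paper's proof does not close this either; it stops at the scalar-$f$ identity.

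What can be proved is weaker and needs its own argument.
\begin{itemize}
\item Clamping, $f(\lambda)=\min(\max(\lambda,\lambda_{\min}),\lambda_{\max})$, is a genuine scalar spectral function, so your identity covers it exactly.
\item For the jitter, suppose $A$ has simple spectrum and the solver returns eigenvalues sorted. Then $i$ is the rank of $\lambda_i$, which is invariant under conjugation, and the rank-one eigenprojectors transform as $P_i\mapsto\rho_c(R)P_i\rho_c(R)^\top$. Hence $\sum_i(\lambda_i+\epsilon i)P_i$ is equivariant. Equivalently, any $f_A$ with $f_A(\lambda_i)=\lambda_i+\epsilon i$ serves for both $A$ and its conjugate, since they share a spectrum, and your identity applies.
\item At a repeated eigenvalue, which is the only case the jitter is meant for, you can claim equivariance only up to $O(\epsilon)$. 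The exception is if the jitter is confined to regularizing the $(\lambda_i-\lambda_j)^{-1}$ factors in the backward pass. Then the forward map $X\mapsto\Sigma(X)$ is the equivariant $A(X)$ followed by clamping and $\exp$, both scalar spectral functions. It is therefore exactly equivariant, and the jitter affects only gradients.
\end{itemize}
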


\begin{proof}
For any orthogonal matrix $U$, the spectral function commutes with orthogonal similarity transformations: $F(UAU^\top) = UF(A)U^\top$. This follows from the fact that $UAU^\top = Q\Lambda Q^\top$ where $Q = UQ_0$ for the original eigenvectors $Q_0$ of $A$. Applying the definition of $F$:
\[
F(UAU^\top) = (UQ_0)\,\text{diag}(f(\lambda_i))\,(UQ_0)^\top = U(Q_0\,\text{diag}(f(\lambda_i))\,Q_0^\top)U^\top = UF(A)U^\top.
\]
Taking $U = \rho_c(R)$ completes the proof.
\end{proof}

\subsection{Numerically Stable Loss Function}
\label{app:stable_loss}

We present two formulations: the standard Gaussian NLL (for comparison) and the Multivariate Laplace NLL used in our implementation.

\begin{proposition}[Gaussian NLL in Log-Euclidean form]
\label{prop:nll_gaussian}
Let $A$ be a symmetric matrix, $\Sigma=\exp(A)$, and $\Delta\mathbf{c}=\mathbf{c}_{\text{true}}-\mu$. The standard Gaussian negative log-likelihood is
\[
\mathcal{L}_{\text{Gauss}} = \frac{1}{2}\log\det\Sigma + \frac{1}{2}\Delta\mathbf{c}^\top\Sigma^{-1}\Delta\mathbf{c}.
\]
Then the following loss is algebraically equivalent and numerically stable:
\[
\boxed{\mathcal{L}_{\text{Gauss}} = \frac{1}{2}\operatorname{Tr}(A) + \frac{1}{2}\Delta\mathbf{c}^\top\exp(-A)\Delta\mathbf{c}.}
\]
\end{proposition}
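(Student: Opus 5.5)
The plan is to rewrite the two terms of $\mathcal{L}_{\text{Gauss}}$ separately, using the spectral decomposition of $A$ that was already used in the proof of Proposition~\ref{prop:exp_equiv_app}.

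Since $A$ is real symmetric, write $A = Q\Lambda Q^\top$ with $Q$ orthogonal and $\Lambda = \mathrm{diag}(\lambda_1,\dots,\lambda_6)$. Then $\Sigma = \exp(A) = Q\exp(\Lambda)Q^\top$, and by Proposition~\ref{prop:exp_equiv_app} this matrix is SPD. Consequently $\log\det\Sigma$ is well defined and $\Sigma^{-1}$ exists.

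\textbf{Log-determinant term.} Use $\det(Q\exp(\Lambda)Q^\top) = \det\exp(\Lambda) = \prod_i e^{\lambda_i} = e^{\sum_i\lambda_i}$. This gives $\log\det\Sigma = \sum_i \lambda_i = \operatorname{Tr}(\Lambda) = \operatorname{Tr}(A)$, where the last step uses cyclicity of the trace. Equivalently, this is Jacobi's identity $\det\exp(A) = e^{\operatorname{Tr} A}$.

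\textbf{Quadratic term.} Observe that $Q\exp(-\Lambda)Q^\top$ is a two-sided inverse of $Q\exp(\Lambda)Q^\top$, so $\Sigma^{-1} = \exp(-A)$. Alternatively, one can note that $A$ and $-A$ commute, so $\exp(A)\exp(-A) = \exp(0) = I$. Substituting gives $\Delta\mathbf{c}^\top\Sigma^{-1}\Delta\mathbf{c} = \Delta\mathbf{c}^\top\exp(-A)\Delta\mathbf{c}$. Together with the determinant identity, this establishes the boxed algebraic equivalence.

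\textbf{Numerical stability.} This is the only non-mechanical step, and the statement makes the claim only informally. I would argue it qualitatively. The rewritten loss avoids forming $\Sigma$ and then inverting it or computing its determinant; both operations amplify error when $\Sigma$ is ill-conditioned, since the condition number is $e^{\lambda_{\max}-\lambda_{\min}}$. Instead:
\begin{itemize}
\item $\operatorname{Tr}(A)$ is a linear function of the network output, with gradient $I$.
\item $\exp(-A)$ is evaluated directly through the eigendecomposition as $Q\exp(-\Lambda)Q^\top$, which is always SPD.
\item The quadratic form is therefore nonnegative.
\end{itemize}
Combined with the eigenvalue clamping from the main text, which is equivariant by Proposition~\ref{prop:spectral_equiv}, every entry of $\exp(\pm\Lambda)$ stays within a bounded range. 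No step requires a Cholesky factorization or a log of a possibly near-zero determinant.
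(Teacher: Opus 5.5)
Your proof is correct and follows the paper's route. The paper simply cites $\det\exp(A)=e^{\operatorname{Tr}A}$ and $(\exp A)^{-1}=\exp(-A)$, which you verify explicitly through the spectral decomposition. The paper's proof gives no argument for the ``numerically stable'' part, so your qualitative discussion is extra material beyond the paper; note that it relies on eigenvalue clamping, which is not among the proposition's hypotheses.
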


\begin{proof}
Using $\Sigma=\exp(A)$ and the identity $\det(\exp(A))=\exp(\operatorname{Tr}(A))$, we obtain $\log\det\Sigma = \operatorname{Tr}(A)$. Since $(\exp(A))^{-1}=\exp(-A)$, the Mahalanobis term becomes $\Delta\mathbf{c}^\top\exp(-A)\Delta\mathbf{c}$.
\end{proof}

\begin{proposition}[Multivariate Laplace NLL in Log-Euclidean form]
\label{prop:nll_laplace}
Let $A$ be a symmetric matrix, $\Sigma=\exp(A)$, and $\Delta\mathbf{c}=\mathbf{c}_{\text{true}}-\mu$. The Multivariate Laplace negative log-likelihood (with unit scale) is
\[
\mathcal{L}_{\text{Laplace}} = \log\det\Sigma + \sqrt{\Delta\mathbf{c}^\top\Sigma^{-1}\Delta\mathbf{c}}.
\]
The numerically stable form in the Lie algebra $\mathfrak{sym}(6)$ is:
\[
\boxed{\mathcal{L}_{\text{Laplace}} = \operatorname{Tr}(A) + D_M,}
\]
where $D_M = \sqrt{\Delta\mathbf{c}^\top\exp(-A)\Delta\mathbf{c}}$ is the Mahalanobis distance in the Log-Euclidean metric.
\end{proposition}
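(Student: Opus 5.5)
The plan mirrors the proof of Proposition~\ref{prop:nll_gaussian}: substitute $\Sigma=\exp(A)$ into the two terms of $\mathcal{L}_{\text{Laplace}}$ and simplify each one separately, using only the properties of the exponential of a real symmetric matrix.

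\emph{Log-determinant term.} I will invoke the identity $\det(\exp(A))=\exp(\operatorname{Tr}(A))$. For symmetric $A$ it follows directly from the spectral decomposition $A=Q\Lambda Q^\top$ used in Proposition~\ref{prop:exp_equiv_app}. Indeed, $\det\exp(A)=\det\bigl(Q\exp(\Lambda)Q^\top\bigr)=\prod_i e^{\lambda_i}=e^{\sum_i\lambda_i}=e^{\operatorname{Tr}(A)}$. Since $\exp(A)$ is SPD by Proposition~\ref{prop:exp_equiv_app}, its determinant is positive, so the logarithm is well defined. This gives $\log\det\Sigma=\operatorname{Tr}(A)$.

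\emph{Mahalanobis term.} I will show that $\Sigma^{-1}=\exp(-A)$. Because $A$ and $-A$ commute, $\exp(A)\exp(-A)=\exp(0)=I$. Equivalently, in the eigenbasis, $Q\exp(\Lambda)Q^\top\,Q\exp(-\Lambda)Q^\top=I$. Hence $\Delta\mathbf{c}^\top\Sigma^{-1}\Delta\mathbf{c}=\Delta\mathbf{c}^\top\exp(-A)\Delta\mathbf{c}$. This quantity is nonnegative because $\exp(-A)$ is SPD, so the square root is real and equals $D_M$ as defined in Eq.~\ref{eq:dm_def}.

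Adding the two terms gives $\mathcal{L}_{\text{Laplace}}=\operatorname{Tr}(A)+D_M$. To justify calling this the stable form, I will note that it never forms $\log\det$ of a possibly ill-conditioned $\Sigma$ and never inverts $\Sigma$. The trace is linear in the network output, and $\exp(-A)$ is obtained from the same eigendecomposition as $\Sigma$.

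\emph{Main obstacle.} The only delicate point is interpretive rather than algebraic. I must state that ``with unit scale'' means the normalization constants of the multivariate Laplace density are dropped, since they do not depend on $(\mu,A)$. The coefficient of $\log\det\Sigma$ is taken as in the statement, and I will remark that this matches Eq.~\ref{eq:stable_loss} with $\alpha=1$. The remaining steps are routine.
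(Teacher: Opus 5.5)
Your proposal is correct and follows the same route as the paper's proof. Both arguments rewrite the two terms using $\det(\exp(A))=\exp(\operatorname{Tr}(A))$, which gives $\log\det\Sigma=\operatorname{Tr}(A)$, and $(\exp(A))^{-1}=\exp(-A)$, which turns the Mahalanobis term into $D_M$; your additions (the spectral derivation of the determinant identity, positivity of $\det\Sigma$ and of the quadratic form, and noting that the coefficient $1$ on $\log\det\Sigma$ is taken from the statement rather than derived) fill in details the paper leaves implicit.
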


\begin{proof}
The log-determinant term follows identically: $\log\det\Sigma = \operatorname{Tr}(A)$. For the Mahalanobis distance term, note that $\Sigma^{-1} = \exp(-A)$, so:
\[
D_M = \sqrt{\Delta\mathbf{c}^\top \Sigma^{-1} \Delta\mathbf{c}} = \sqrt{\Delta\mathbf{c}^\top \exp(-A) \Delta\mathbf{c}}.
\]
The key difference from the Gaussian case is the \textit{square root}: the Laplace NLL is linear in $D_M$ rather than quadratic in $D_M^2$. This provides robustness to outliers, as large residuals contribute linearly rather than quadratically to the loss.
\end{proof}

\paragraph{Laplacian-Huber Compound Robust Loss.}
To handle extreme outliers in materials data, we introduce a \textbf{Laplacian-Huber scheme} that combines two complementary robustness mechanisms. For the Mahalanobis distance $D_M = \sqrt{\Delta\mathbf{c}^\top \Sigma^{-1} \Delta\mathbf{c}}$, our robust loss is:
\begin{equation}
\tilde{D}_M = \begin{cases}
D_M, & D_M < \tau \quad \text{(linear region)}\\
\tau + \log(1 + D_M - \tau), & D_M \geq \tau \quad \text{(log-tail region)}
\end{cases}
\label{eq:huber_laplace}
\end{equation}
This design has a clear statistical interpretation: (1) the \textbf{linear region} ($D_M < \tau$) preserves the core Laplace distribution assumption, providing natural robustness through a linear rather than quadratic penalty on residuals; (2) the \textbf{log-tail region} ($D_M \geq \tau$) smoothly compresses the contribution of extreme residuals so that the residual penalty grows logarithmically rather than linearly, mitigating large updates from rare outliers in practice. We set $\tau = 5.0$ based on validation analysis—approximately 99\% of well-predicted samples have $D_M < 5$, while extreme outliers beyond this threshold are smoothly bounded without affecting the majority of the data distribution.

\paragraph{Gradient Behavior of the Laplace NLL.}
Using eigenvalue decomposition $A = Q\Lambda Q^\top$, define the whitened residual $\mathbf{z} = Q^\top \Delta\mathbf{c}$. The Mahalanobis distance becomes:
\[
D_M = \sqrt{\sum_{i=1}^6 z_i^2 \exp(-\lambda_i)}.
\]
The gradient with respect to eigenvalues $\Lambda$ in the linear region ($D_M < \tau$) is:
\begin{equation}
\frac{\partial \mathcal{L}_{\text{Laplace}}}{\partial \lambda_k} = 1 - \frac{z_k^2 \exp(-\lambda_k)}{2 D_M},
\label{eq:gradient_laplace}
\end{equation}
Compared to the Gaussian gradient $\frac{1}{2}(1 - z_k^2 \exp(-\lambda_k))$, the Laplace gradient carries an additional $1/(2D_M)$ factor that reduces the growth rate of the data-fit term, but it does not by itself yield a uniform bound: when a single direction $k$ dominates $D_M$ (so $D_M \approx |z_k|\exp(-\lambda_k/2)$), the contribution behaves like $|z_k|\exp(-\lambda_k/2)/2$ and is unbounded as $\lambda_k \to -\infty$. In the logarithmic tail region ($D_M \geq \tau$), the gradient becomes:
\[
\frac{\partial \tilde{D}_M}{\partial \lambda_k} = -\frac{z_k^2 \exp(-\lambda_k)}{2(1 + D_M - \tau) D_M},
\]
which approaches a finite constant ($\to -1/2$ in a single dominant direction) rather than diverging linearly with $z_k^2 \exp(-\lambda_k)$ as in the unmodified Laplace case. We therefore do not claim that the Lie algebra parameterization alone guarantees bounded gradients near the SPD boundary; rather, in our implementation, gradient and matrix-exponential overflow are controlled jointly by (i) eigenvalue clamping $\lambda_k \in [\lambda_{\min}, \lambda_{\max}]$ before the exponential map, and (ii) the log-tail robustification in Eq.~\ref{eq:huber_laplace}.

\paragraph{Comparison: Gaussian vs. Laplace NLL.}
The key distinctions are summarized below:

\begin{table}[h]
\centering
\begin{tabular}{lcc}
\toprule
Property & Gaussian NLL & Laplace NLL (LE-ESO) \\
\midrule
Mahalanobis term & $\frac{1}{2} D_M^2$ & $D_M$ \\
Penalty shape & Quadratic & Linear \\
Log-det coefficient & $\frac{1}{2}$ & $1$ (tunable as $\alpha$) \\
Outlier sensitivity & High (quadratic growth) & Low (linear growth) \\
Gradient for large $D_M$ & $\propto D_M$ & $\propto 1$ \\
Statistical assumption & Light-tailed errors & Heavy-tailed errors \\
\bottomrule
\end{tabular}
\caption{Comparison of Gaussian and Laplace negative log-likelihood formulations.}
\end{table}

\section{Model Architecture and Implementation Details}
\label{app:implementation}

\subsection{Network Architecture and Data Flow}
\label{app:network_arch}

Our architecture implements an E(3)-equivariant neural network using standard message passing layers. The input atomic numbers are first projected into 119-dimensional Magpie feature embeddings, which then pass through $L=2$ interaction layers with a hidden dimension of 64. The backbone employs SiLU activations for scalar features and gated-tanh for higher-order tensors to preserve equivariance throughout computation. To support rank-4 covariance output, we set the maximum rotation order $\ell_{max}=4$, enabling the full $\mathrm{Sym}^2(\rho_c)$ representation required by our theoretical decomposition.

The network branches into two distinct heads that operate in parallel. The mean head predicts Voigt components through $\ell=0 \oplus \ell=2$ irreducible representations, directly outputting the tensor mean prediction. The covariance head outputs the symmetric tensor basis defined as $2\times(\ell=0) \oplus 2\times(\ell=2) \oplus 1\times(\ell=4)$, which is then linearly projected to the Lie algebra element $A(X) \in \mathbb{R}^{6 \times 6}_{sym}$. This dual-head architecture ensures that both mean and uncertainty predictions respect the underlying geometric symmetries.

\paragraph{Joint Training Stability.}
The Lie algebra parametrization combined with the LE-ESO loss provides inherent numerical stability that, in our experiments, enables end-to-end joint optimization of the mean and covariance heads without gradient detachment. To validate this robustness, we conducted an ablation study comparing joint training with gradient-detached training (where UQ gradients are blocked from flowing back to the backbone). Both approaches achieved comparable performance (MAE difference $<0.02$), with joint training showing marginally better uncertainty calibration. This empirical finding is consistent with the geometric advantage of the Log-Euclidean framework: by operating in the flat tangent space $\mathfrak{sym}(6)$ rather than on the curved SPD manifold directly, gradients remain well-conditioned even when the covariance head receives informative error signals from the scoring objective. We did not observe variance collapse or shortcut learning in this setting.

\subsection{Implementation Details of the Equivariant Covariance Head}
\label{app:cov_head_impl}

To strictly enforce the symmetry properties of the covariance tensor, we employ the \texttt{CartesianTensor} formalism from the \texttt{e3nn} library \cite{geiger2022e3nn}. The covariance of a symmetric rank-2 tensor is mathematically a rank-4 tensor $\mathcal{C}_{ijkl}$ with specific permutation symmetries. First, the covariance exhibits symmetry of the first tensor argument such that $\mathcal{C}_{ijkl} = \mathcal{C}_{jikl}$. Second, it maintains symmetry of the second tensor argument with $\mathcal{C}_{ijkl} = \mathcal{C}_{ijlk}$. Third, the covariance itself is symmetric, satisfying $\mathcal{C}_{ijkl} = \mathcal{C}_{klij}$.

In our implementation, we define the output space using the formula \texttt{"ijkl=jikl=ijlk=klij"}, which restricts the learnable basis to the subspace of $\mathbb{R}^{3\times3\times3\times3}$ satisfying these symmetries. The \texttt{e3nn} library automatically computes the change-of-basis matrix from the irreducible representations (irreps) of $SO(3)$ to this symmetric Cartesian basis. The projection to the $6\times6$ Kelvin-Mandel matrix $A(X)$ proceeds in two systematic steps.

First, in the irreps to Cartesian mapping, the features are mapped to the rank-4 Cartesian tensor $\mathcal{C}_{ijkl}$ using the precomputed equivariant basis:
\[
\mathcal{C}_{ijkl} = \sum_{L,m} w_{L,m} Y^L_{ijkl},
\]
where $Y^L_{ijkl}$ are the Clebsch-Gordan coefficients projecting the spherical harmonics onto the Cartesian tensor components. Second, in the Cartesian to Kelvin-Mandel transformation, the $3\times3\times3\times3$ tensor is flattened into a $6\times6$ matrix $A_{KM}$ using the Kelvin-Mandel isometry. This mapping preserves the Frobenius norm (i.e., $\|\mathcal{C}\|_F = \|A_{KM}\|_F$) by scaling the off-diagonal shear components by $\sqrt{2}$. For indices mapping $ij \to \alpha$ and $kl \to \beta$ (where $\alpha, \beta \in \{1..\!6\}$), the entry $A_{\alpha\beta}$ is given by:
\[
A_{\alpha\beta} = \eta_\alpha \eta_\beta \mathcal{C}_{ijkl},
\]
where $\eta = \{1, 1, 1, \sqrt{2}, \sqrt{2}, \sqrt{2}\}$ corresponds to the indices $\{xx, yy, zz, yz, xz, xy\}$. This construction guarantees that the predicted matrix $A(X)$ strictly lies in the symmetric subspace $\mathfrak{sym}(6)$ and transforms exactly according to $\rho_c \otimes \rho_c$.

\subsection{Training Protocol and Stability Measures}
\label{app:training_protocol}

All models were optimized using AdamW with hyperparameters $\beta_1=0.9, \beta_2=0.999$ and a weight decay of $10^{-4}$. We employed a OneCycleLR scheduler with a peak learning rate of $10^{-3}$, warming up for 20\% of the total 50 epochs before gradually decaying. To ensure training stability during the critical early phases of covariance learning, we implemented several key strategies.

We introduced a \textbf{critical loss annealing strategy} where the auxiliary MSE warmup weight $\lambda_{\mathrm{MSE}}$ gradually decays from $0.9$ to full LE-ESO optimization by epoch 5 (note that $\lambda_{\mathrm{MSE}}$ is distinct from the LE-ESO weight $\alpha$ in Eq.~\ref{eq:stable_loss}). This gradual transition is essential for preventing early training instability - it allows the network to first learn reasonable mean predictions before tackling the more complex uncertainty quantification task. Furthermore, we applied eigenvalue clamping within $[\lambda_{\min}, \lambda_{\max}] = [-4, 3]$ to prevent numerical overflow in the matrix exponential computation. This constraint ensures that the resulting covariance eigenvalues remain in $[e^{-4}, e^3] \approx [0.018, 20.1]$, preventing both variance collapse and explosion during early training.

\paragraph{Anisotropic Jitter as Numerical Gradient Stabilizer.}
A subtle numerical issue arises in automatic differentiation of eigenvalue decompositions: when the covariance matrix has degenerate eigenvalues ($\lambda_i = \lambda_j$), the Jacobian contains singular terms $(\lambda_i - \lambda_j)^{-1}$. This is particularly problematic for high-symmetry crystals (e.g., cubic systems) where physical symmetry can cause eigenvalue degeneracy. To ensure differentiability, we introduce a \textit{numerical gradient stabilizer}—a tiny anisotropic perturbation applied only during the spectral decomposition step:
\begin{equation}
\tilde{\lambda}_i = \lambda_i + \epsilon \cdot i, \quad i = 0, \ldots, 5,
\label{eq:anisotropic_jitter}
\end{equation}
with $\epsilon \approx 10^{-6}$ for float64 precision. This design is crucial: an \textit{isotropic} shift ($\epsilon \cdot I$) would preserve degeneracy and fail to resolve the singularity, whereas the anisotropic pattern guarantees $\lambda_i - \lambda_j \neq 0$ for all $i \neq j$. Importantly, this jitter is \textbf{not} an architectural choice—it is a numerical safeguard with magnitude $O(10^{-6})$ that is negligible compared to typical eigenvalue scales ($\sim 1$). Empirically, our equivariance verification (Table~\ref{tab:equivariance}) shows errors on the order of $10^{-7}$, confirming that this minimal perturbation does not compromise the geometric fidelity of the learned representations. The jitter operates entirely within the numerical solver and is invisible to the upstream equivariant architecture.

\paragraph{Discussion on Jitter and Calibration Impact.}
The introduced jitter ($\epsilon \approx 10^{-6}$) is several orders of magnitude smaller than the predicted eigenvalues ($\sim 1.0$). We observe that this perturbation is essential for maintaining stable gradients during joint training but has a negligible impact on both calibration (MACE change $< 10^{-4}$) and equivariance (errors remain at the level of $10^{-7}$ as reported in Table~\ref{tab:equivariance}).

\paragraph{Hyperparameter Details for Numerical Stability.}
We provide the specific hyperparameter values used in our implementation. For eigenvalue clamping, we use $[\lambda_{\min}, \lambda_{\max}] = [-4, 3]$, which constrains the covariance eigenvalues to $[e^{-4}, e^3] \approx [0.018, 20.1]$. This range was chosen to prevent both variance collapse (eigenvalues $\ll 1$) and explosion (eigenvalues $\gg 1$) during early training. For the Huber robustification, the threshold is set to $\tau = 5.0$, meaning that Mahalanobis distances above 5.0 transition to logarithmic scaling. This threshold was selected based on validation set analysis to be significantly above typical well-predicted samples ($D_M \approx 2$--$3$) while effectively capping the influence of extreme outliers.

\paragraph{Log-Euclidean Framework and Information Geometry.}
The space of SPD matrices $\mathcal{P}_6$ is not a vector space but a Riemannian manifold with non-Euclidean geometry. Direct optimization on this manifold introduces path-dependent gradients and numerical instabilities near the boundary. By working in the tangent space $\mathfrak{sym}(6)$—the Lie algebra of symmetric matrices—we obtain a flat Euclidean vector space where standard optimization is geometrically well-defined. The matrix exponential serves as the Riemannian exponential map, lifting points from the tangent space to the curved manifold while preserving the geometric structure.

\paragraph{Geometric Interpretation of $\alpha$.}
The parameter $\alpha$ controls the tightness of the equivariant confidence hull, a process analogous to entropy regularization in information-theoretic learning. The term $\log \det \Sigma$ represents the infinitesimal volume element of the uncertainty manifold in the Riemannian geometry of SPD matrices. By adjusting $\alpha$, we effectively control the trade-off between: (i) \textit{Information-theoretic volume:} $\alpha \log \det \Sigma = \alpha \operatorname{Tr}(A)$ penalizes excessive uncertainty spread; and (ii) \textit{Geometric fit:} $D_M$ measures the normalized prediction error in the metric induced by $\Sigma$. Theoretically, for the standard Multivariate Laplace distribution, $\alpha = 1$ (no $1/2$ coefficient as in the Gaussian case). However, we treat $\alpha$ as a tunable hyperparameter to balance model confidence with coverage: larger $\alpha$ encourages tighter confidence regions (lower uncertainty volume), while smaller $\alpha$ allows more conservative uncertainty estimates. This flexibility is valuable for materials science applications where the true noise level may vary across different datasets and measurement modalities.

\paragraph{Temperature Scaling for Calibration.}
To ensure the predicted covariance tensors $\Sigma$ reflect the empirical error distribution, we apply post-hoc temperature scaling \citep{kuleshov2018accurate}. The optimal temperature $T \approx 0.05$ was determined on the validation set via a robust median-matching strategy. The small value of $T$ reflects the heavy-tailed nature of the initial residuals, requiring the model to significantly contract its uncertainty hulls after training with the robustified LE-ESO. This adjustment yields a calibrated covariance $\Sigma' = T \cdot \Sigma$, which is equivalent to an additive shift $A' = A + \ln(T)I$ in the Lie algebra. This scaling effectively aligns the predictive distribution with the requirements of scoring rules (e.g., Energy Score) without affecting the mean prediction or the exact E(3)-equivariance.

\paragraph{Spectral Bounding for Manifold Consistency.}
To maintain numerical consistency with the Riemannian structure of $\mathcal{P}_6$, we constrain the Lie algebra eigenvalues to a bounded interval before computing $\exp(\Lambda)$. This spectral bounding ensures that the resulting covariance eigenvalues remain in a geometrically valid range, preventing both variance collapse (near-zero eigenvalues) and explosion (excessively large eigenvalues) during early training when predictions may be far from the data manifold. The bounds $[\lambda_{\min}, \lambda_{\max}]$ are chosen to map to a physically meaningful covariance spectrum $[e^{\lambda_{\min}}, e^{\lambda_{\max}}]$ under the matrix exponential.

\paragraph{Laplacian-Huber Compound Robust Loss.}
To handle extreme outliers in material property data, we introduce a \textbf{Laplacian-Huber scheme} with two regimes: when the Mahalanobis distance $D_M$ is below threshold $\tau = 5.0$, we apply a \textbf{linear penalty} (the Laplace distribution core); when $D_M$ exceeds $\tau$, we switch to a \textbf{logarithmic penalty} ($\tau + \log(1 + D_M - \tau)$) that compresses the residual contribution for very large $D_M$. This design preserves the statistical interpretation of the Multivariate Laplace distribution for normal samples while limiting the influence of extreme outliers, so that the residual term grows logarithmically instead of linearly in the tail.

\paragraph{Gradient Analysis: Practical Control on the Lie Algebra.}
We do not claim that the Lie algebra parameterization by itself yields a uniform gradient bound near the SPD-cone boundary; rather, in practice, gradient and matrix-exponential overflow are controlled jointly by eigenvalue clamping and the log-tail robustification. Let $\mathbf{z} = Q^\top (\mathbf{c}_{\text{true}} - \mu)$ be the rotated residuals in the eigenbasis. The loss gradient with respect to eigenvalues $\Lambda = \text{diag}(\lambda_1, \ldots, \lambda_6)$ is:
\begin{equation}
\frac{\partial \mathcal{L}_{\text{LE-ESO}}}{\partial \Lambda} = \alpha I - \frac{\partial \tilde{D}_M}{\partial \Lambda}.
\label{eq:gradient_eigen_app}
\end{equation}
As derived in Proposition~\ref{prop:nll_laplace}, the Laplace residual term carries an extra $1/(2D_M)$ factor relative to the Gaussian case, which reduces the growth rate of $\partial D_M/\partial \lambda_k$ but does not by itself produce a uniform bound: when a single direction dominates $D_M$, the contribution to $\partial D_M/\partial \lambda_k$ still grows as $|z_k|\exp(-\lambda_k/2)/2$ as $\lambda_k \to -\infty$. We therefore enforce the constraint $\lambda_k \in [\lambda_{\min}, \lambda_{\max}]$ before the matrix exponential, which prevents $\exp(-\lambda_k)$ from diverging and keeps $\partial D_M/\partial \lambda_k$ finite over the optimization trajectory. The log-tail region ($D_M \geq \tau$) further compresses the residual gradient: $\partial \tilde{D}_M / \partial \lambda_k$ approaches a finite constant ($\to -1/2$ in a dominant direction) instead of growing with $z_k^2 \exp(-\lambda_k)$, mitigating the influence of rare extreme outliers. Empirically, this combination keeps training stable enough to support end-to-end joint optimization without gradient detachment. The loss maintains $O(3)$-invariance since both the trace and matrix exponential preserve equivariance under orthogonal transformations.

The numerically stable loss in Eq.~\ref{eq:stable_loss} follows directly from algebraic identities proven in Proposition~\ref{prop:nll_laplace}. Importantly, both the trace term $\operatorname{Tr}(A)$ and the Mahalanobis distance $D_M = \sqrt{\Delta\mathbf{c}^\top \exp(-A)\Delta\mathbf{c}}$ are invariant under any orthogonal transformation $\rho_c(R)$:
\[
\operatorname{Tr}(\rho_c(R)A\rho_c(R)^\top) = \operatorname{Tr}(A),
\qquad
D_M(\rho_c(R)\Delta\mathbf{c}, \rho_c(R)A\rho_c(R)^\top) = D_M(\Delta\mathbf{c}, A).
\]
Consequently, the loss function provides an exact symmetry-preserving training objective, in contrast to approximate equivariant regularizations or data augmentation-based approaches.

Training was conducted on a single NVIDIA RTX 4060 Ti with batch sizes of 32 for ModelNet40 and 16 for the Materials Project, requiring approximately 10 hours for complete convergence. Additional implementation details include processing atomic structures into graphs with a $5.0$Å cutoff distance and using Magpie feature embeddings of dimension 119 for atomic number representations.

\section{Experimental Setup and Analysis}
\label{app:experimental}

\subsection{Dataset Configuration and Preprocessing}
\label{app:datasets}

We evaluate our framework on two distinct datasets that provide complementary validation of our equivariant uncertainty quantification approach. ModelNet40 serves for geometric validation with physically defined tensor properties, while the Materials Project provides a real-world materials science application with experimentally relevant predictions.

\paragraph{ModelNet40.}
The dataset comprises 12,311 CAD models across 40 categories. We adhere to the official split, utilizing 9,843 models for training and 2,468 for testing. To simulate measurement uncertainty and validate our probabilistic framework, we sample $N=2048$ points uniformly from mesh surfaces and apply Gaussian jitter with $\sigma_{noise}=0.01$. This noise injection creates the aleatoric uncertainty necessary for testing our framework's ability to capture geometric ambiguity arising from point cloud sampling.

\paragraph{Materials Project Dielectric Dataset.}
We source precomputed dielectric tensor predictions from the Materials Project database \cite{barroso2024open, jain2013commentary}. To ensure data quality and consistency, we apply systematic filtering criteria: (1) \textbf{structure size}---we exclude crystals with fewer than 3 atoms or more than 30 atoms to balance computational efficiency and representation learning; (2) \textbf{positive-definiteness}---we verify that all dielectric tensors have eigenvalues strictly greater than $10^{-4}$, excluding numerically singular matrices; (3) \textbf{value range}---we remove samples with dielectric constants outside $[-10, 50]$ or with diagonal entries below 1.0. After filtering, the dataset comprises 5,002 crystalline structures, partitioned into 4,236 for training, 485 for validation, and 281 for testing. We apply Matrix Log-Normalization with parameters $\mu_{log}=1.24$ and $\sigma_{log}=0.86$ to handle the wide dynamic range while preserving the relationships between different crystal structures.

\subsection{Equivariance Ablation Study Design}
\label{app:ablation-design}

To systematically isolate the contributions of equivariance and SPD constraints, we designed four baseline variants that progressively incorporate different architectural components. Baseline~A employs a standard non-equivariant GNN with a Cholesky covariance head to test the necessity of equivariant message passing. Baseline~B upgrades the backbone to an equivariant neural network (ENN), but keeps coordinate-wise scalar MLP outputs for both the mean and Cholesky covariance heads, evaluating whether equivariant features alone are sufficient to guarantee equivariant tensor outputs. Baseline~B$'$ further replaces the scalar mean head with the same equivariant mean construction used in our model while retaining the Cholesky covariance head. This baseline isolates the covariance-parameterization failure mode: the mean branch is equivariant by construction, whereas the Cholesky covariance is SPD but not equivariant under the Kelvin--Mandel covariance representation. Baseline~C uses an ENN backbone with direct equivariant regression of the symmetric operator $A(X)$ but omits the matrix exponential, testing the importance of the SPD projection. Finally, our full method combines the ENN backbone with the matrix-exponential covariance head to simultaneously guarantee covariance equivariance and SPD validity.

\section{Additional Experimental Results}
\label{app:additional-experiments}

This appendix collects supplementary experiments that complement the two main experiments in the body of the paper. They are intended as supporting evidence for the scope and robustness of the proposed equivariant SPD/UQ construction rather than as a comprehensive benchmarking study.

\subsection{ModelNet40 Shape-Covariance Validation}
\label{app:shape-covariance}

The shape-covariance experiment is a controlled geometric validation benchmark on the ModelNet40 dataset~\cite{wu20153d}. Like inertia tensor prediction, the target admits a closed-form estimator from the point cloud. We therefore do not present this task as a real-world setting where neural prediction is necessary. Instead, it tests whether the proposed equivariant SPD/UQ construction remains valid on a second symmetric rank-2 tensor target beyond inertia, demonstrating that the framework is not specific to the inertia formulation.

\begin{table}[h]
\centering
\caption{ModelNet40 shape-covariance validation. The goal is controlled geometric validation rather than replacing the closed-form estimator. ``Mean Tensor PSD Rate'' refers to the fraction of predicted mean shape-covariance tensors that are positive semi-definite, distinct from the SPD validity of the predictive covariance $\Sigma(X)$.}
\label{tab:shape-covariance}
\begin{small}
\begin{sc}
\begin{tabular}{lccc}
\toprule
Method & MAE $\downarrow$ & RMSE $\downarrow$ & Mean Tensor PSD Rate \\
\midrule
Deterministic (MSE) & 0.0333 & 0.0616 & 99.2\% \\
Full UQ (Ours)      & 0.0346 & 0.0633 & 99.5\% \\
\bottomrule
\end{tabular}
\end{sc}
\end{small}
\end{table}

The point-prediction MAE/RMSE of the UQ model is comparable to the deterministic baseline, while the predictive covariance $\Sigma(X)$ is exactly E(3)-equivariant and SPD by construction. As in the inertia setting, we observe near-machine-precision equivariance (errors on the order of $10^{-7}$) and strict SPD validity for the predictive covariance.

\subsection{Rank-4 Elasticity Tensor Prediction}
\label{app:elasticity}

We evaluate the framework on a real-data elasticity tensor prediction task from the Materials Project~\cite{jain2013commentary}. Unlike the rank-2 dielectric setting, the mean target here is directly a rank-4 elasticity tensor. Under the standard minor and major symmetries, the elasticity tensor has 21 independent components. This experiment is intended as supporting evidence that the proposed structured equivariant SPD/UQ construction can be extended beyond the six-dimensional symmetric rank-2 setting; it is not intended as a comprehensive study of all higher-order tensor parameterizations.

On this benchmark, the model achieves a test MAE of approximately 5.0 GPa, which is essentially on par with a deterministic baseline and noticeably better than a naive UQ baseline. The structured UQ model also improves uncertainty quality over the naive baseline: empirical coverage rises from approximately $35\%$ to $52\%$, and the uncertainty--error correlation rises from approximately $-0.15$ to $0.31$. At the same time, both numerical equivariance/prediction consistency and predictive covariance SPD validity remain at $100\%$, matching the structural behavior observed in the rank-2 dielectric setting. We emphasize that this single experiment is supporting evidence that the proposed structured equivariant SPD/UQ construction remains feasible on a higher-order tensor target, rather than an exhaustive higher-order benchmark.

\subsection{Computational Overhead}
\label{app:runtime}

We profile per-batch wall-clock time on the Materials Project dielectric task using a single NVIDIA RTX 4060 Ti, with batch size 16 and identical input pipelines.

\begin{table}[h]
\centering
\caption{Runtime profiling on Materials Project (RTX 4060 Ti, batch size 16). The full-covariance model is more expensive primarily because of the covariance branch and its backpropagation, rather than the matrix exponential alone.}
\label{tab:runtime}
\begin{small}
\begin{sc}
\begin{tabular}{lcc}
\toprule
Model & Time / Batch & Relative Cost \\
\midrule
Deterministic       & 130 ms & 1.00$\times$ \\
Diagonal UQ         & 132 ms & 1.015$\times$ \\
Full Covariance UQ  & 570 ms & 4.4$\times$  \\
\bottomrule
\end{tabular}
\end{sc}
\end{small}
\end{table}

The diagonal-UQ overhead is negligible (1.5\%), confirming that anisotropy modeling, not uncertainty quantification per se, dominates cost. For inference, a single forward pass yields the full anisotropic covariance, in contrast to ensemble methods that require $N$ forward passes.

\subsection{Sensitivity to the LE-ESO Weight}
\label{app:alpha-sensitivity}

The weight $\alpha$ controls the trade-off between the log-volume term $\mathrm{Tr}(A)=\log\det\Sigma$ and the geometric data-fit term in LE-ESO. We use $\alpha=1$ in the main experiments because it corresponds to the canonical coefficient in the multivariate Laplace objective motivating LE-ESO.

To evaluate sensitivity, we run a short validation sweep over $\alpha\in\{0.03,0.10,0.30,1.00\}$ on the Materials Project dielectric task. Table~\ref{tab:alpha_sensitivity} reports the best validation MAE in log-Kelvin--Mandel space. Across the tested values, the validation MAE remains in a moderate range ($0.352$--$0.457$), indicating that performance is not tied to a narrow value of $\alpha$. The canonical choice $\alpha=1$ also gives the lowest validation MAE in this sweep.

We also report the best validation LE-ESO value for completeness. Importantly, this value is evaluated using the same $\alpha$ as the corresponding training run, and therefore should be interpreted as the optimized objective for that setting rather than as a fixed cross-$\alpha$ negative log-likelihood. Since changing $\alpha$ changes the scoring objective itself, these LE-ESO values are not directly comparable as absolute NLL values across different $\alpha$.

\begin{table}[t]
\centering
\caption{Sensitivity to the LE-ESO weight $\alpha$ on the Materials Project dielectric task. MAE is measured in log-Kelvin--Mandel space and is comparable across rows. The LE-ESO value is evaluated with the same $\alpha$ used for training, so it reflects the optimized objective for each setting rather than a fixed cross-$\alpha$ NLL.}
\label{tab:alpha_sensitivity}
\vskip 0.05in
\begin{small}
\begin{sc}
\begin{tabular}{ccc}
\toprule
$\alpha$ & Best val MAE $\downarrow$ & Best val LE-ESO $\downarrow$ \\
\midrule
0.03 & 0.3634 & 0.7911 \\
0.10 & 0.4176 & 0.7165 \\
0.30 & 0.4566 & 0.3469 \\
1.00 & \textbf{0.3519} & -2.9393 \\
\bottomrule
\end{tabular}
\end{sc}
\end{small}
\vskip -0.05in
\end{table}

\subsection{Additional Risk-Coverage Analysis}
\label{app:risk-coverage}

To complement the risk-coverage discussion in the main paper, we report the full retained-set comparison between $\lambda_{\max}(\Sigma)$ ranking, $\operatorname{Trace}(\Sigma)$ ranking, and a diagonal-UQ baseline that ignores off-diagonal correlations. At 90\% coverage, ranking by $\lambda_{\max}$ improves retained-set MAE by 3.1\% relative to the full test set. The improvement of $\lambda_{\max}$ over $\operatorname{Trace}$ under the same retained-set protocol is approximately 1.5\%---smaller than the headline 3.1\% number but consistent across coverage levels. At 80\% coverage, $\lambda_{\max}$ continues to retain a positive improvement, while $\operatorname{Trace}$-based ranking can fall slightly below the full-dataset baseline. The diagonal-UQ baseline, which lacks off-diagonal covariance information, ranks the test set less informatively than either $\lambda_{\max}$ or $\operatorname{Trace}$ from the full-covariance model. These results support the interpretation that directional uncertainty captures failure modes that scalar total uncertainty partially obscures, while clarifying that the practical advantage over $\operatorname{Trace}$ is moderate rather than dramatic.

\section{Additional Results and Analysis}
\label{app:additional_results}

\subsection{Training Dynamics and Loss Analysis}
\label{app:training_dynamics}

Figure~\ref{fig:appendix_training} illustrates the complete training dynamics of our equivariant uncertainty framework. To ensure a stable optimization landscape, we employ a two-stage curriculum: the model is initially warmed up with a combined MSE-LE-ESO objective for 5 epochs to establish a reliable mean prediction baseline before transitioning to heavy-tailed LE-ESO optimization.

Panel (a) reveals that the loss stabilizes rapidly upon transition, with no numerical spikes despite the non-linear nature of the matrix exponential map. Panel (b) demonstrates that the addition of the uncertainty branch does not compromise the underlying point-prediction accuracy; instead, the MAE for both diagonal ($\varepsilon_{ii}$) and off-diagonal ($\varepsilon_{ij}$) components plateaus at a state-of-the-art level, benefiting from the robust regularization provided by the UQ branch.

Most importantly, panel (c) highlights the sophisticated trade-off mechanism inherent in our loss formulation. As the validation epoch progresses, the network balances the data fit term (Mahalanobis distance) against the uncertainty regularization term ($\log \det \Sigma$). The joint optimization allows both branches to benefit from shared geometric representations, preventing ``shortcut learning'' where the model might collapse its uncertainty to minimize the scoring rule. The eventual convergence of the Mahalanobis distance toward a steady value confirms that the model has effectively learned to characterize the aleatoric noise in the dielectric property space.

\begin{figure*}[htbp]
    \centering
    \includegraphics[width=0.9\textwidth]{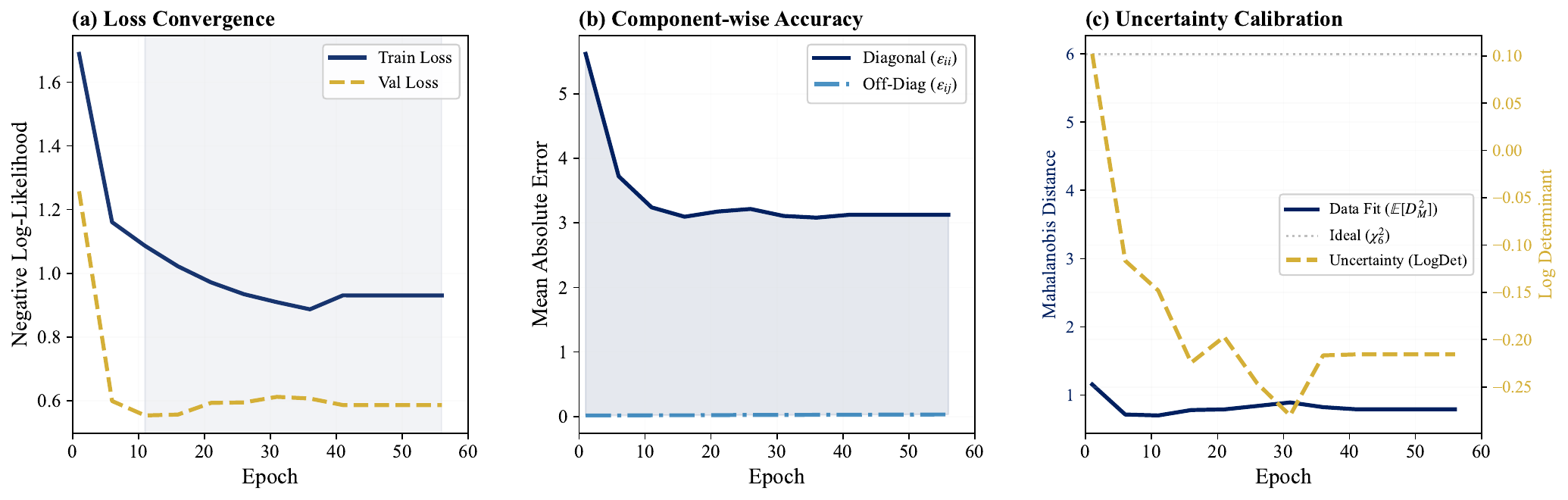}
    \caption{\textbf{Training dynamics.} Two-stage optimization: warmup (5 epochs) then LE-ESO. (a) LE-ESO convergence. (b) MAE stability for diagonal/off-diagonal components. (c) Balance between data fit ($\mathbb{E}[D_M]$) and regularization ($\log\det\Sigma$).}
    \label{fig:appendix_training}
\end{figure*}

\subsection{Empirical Verification of Theoretical Guarantees}
\label{app:verification}

To validate the theoretical guarantees established in Appendix~\ref{app:proofs}, we performed rigorous numerical checks throughout training that confirm both the mathematical correctness and practical stability of our implementation.

The numerical stability of our approach stems from the eigenvalue decomposition $A=Q\Lambda Q^\top$ used to compute the loss without explicitly forming $\Sigma=\exp(A)$. Since $A$ is symmetric, all eigenvalues $\lambda_i$ are real and $\exp(\lambda_i)$ remains positive. In practice, eigenvalue clamping keeps these exponentials bounded, preventing numerical overflow and improving gradient conditioning. Together with the log-tail robustification, this enables joint end-to-end training without explicit regularization on the covariance spectrum, addressing a critical limitation of direct covariance optimization approaches.

For equivariance verification, we continuously monitored the relative Frobenius-norm difference between rotated predictions and transformed predictions:
\[
E_{\text{equiv}} = \frac{\|\Sigma(R\!\cdot\! X) - \rho_c(R)\Sigma(X)\rho_c(R)^\top\|_F}{\|\Sigma(X)\|_F}.
\]
Across all random rotations tested during training, this error consistently remained at the level of $10^{-7}$, confirming that our implementation achieves near-machine-precision equivariance rather than approximate symmetry preservation.

For SPD validation, we monitored the spectrum of predicted covariance matrices throughout training. The minimum eigenvalue of $\Sigma(X)$ across all batches remained strictly positive ($>10^{-5}$), with no numerical violations of the SPD constraint observed (see Figure~\ref{fig:appendix_uq}a).

Beyond the geometric validation on ModelNet40, we further analyzed the conditioning of the predicted covariances for the dielectric tensor task (Materials Project). As shown in Figure~\ref{fig:appendix_uq}b, the distribution of condition numbers $\kappa(\Sigma)$ remains numerically well-conditioned for the final model, with a mean of 3.80 and a maximum of 16.4.

This result is particularly significant because, unlike the synthetic jitter in ModelNet40, the uncertainty in dielectric tensors arises from complex physical and DFT approximation errors. The low condition numbers indicate that our matrix exponential mapping naturally induces numerically stable, non-degenerate uncertainty estimates without requiring auxiliary regularization terms (e.g., hinge loss penalties on eigenvalues). This confirms that the optimization landscape remains well-behaved even for high-dimensional material representations.

\subsection{Reflection Symmetry and Chirality Handling}
\label{app:reflection}

Our framework explicitly accounts for improper rotations (reflections) by ensuring that the representation $\rho_c$ correctly tracks the parity of the tensorial outputs. For the symmetric rank-2 tensors considered here—such as dielectric or inertia tensors—the physical quantities are \textbf{even tensors} under parity, meaning they are invariant to inversion. The Kelvin-Mandel representation $\rho_c(R)$ used throughout this paper is defined by projecting $R \otimes R$ onto the symmetric subspace, as given in Eq.~\ref{eq:rho_c_projection} of Appendix~\ref{app:rotation_matrices}; we use that construction directly here rather than introducing a separate definition. Since this transformation is built from two factors of $R$, the determinant contribution $(\det R)^2 = 1$ ensures that the framework handles chiral structures and their mirror images with consistent physical semantics. We numerically verified full $O(3)$ equivariance by testing improper rotations, achieving errors at the level of $10^{-7}$ consistent with the $SO(3)$ results reported in Table~\ref{tab:equivariance}. Consequently, our uncertainty quantification remains valid regardless of the handedness of the coordinate system, a critical requirement for modeling both chiral and achiral materials.

\subsection{Spectral Analysis and Sharpness Distribution}
\label{app:uq_diagnostics}

To further investigate the UQ quality, we provide detailed spectral analysis in Figure~\ref{fig:appendix_uq}. The eigenvalue distribution (Panel a) confirms that all predicted covariances maintain strict positive-definiteness with a minimum eigenvalue $\lambda_{\min} \approx 0.449$, safely avoiding variance collapse. The condition number distribution in Figure~\ref{fig:appendix_uq}b shows that the predicted covariance matrices remain numerically well-conditioned, consistent with the verification in Appendix~E.2. Complementary to the risk-coverage analysis in Section~\ref{subsec:calibration}, the sharpness distribution in Figure~\ref{fig:uq_utility}a reveals that the model effectively differentiates between ``simple'' and ``complex'' atomic environments by assigning confidence volumes spanning several orders of magnitude.

\begin{figure*}[htbp]
    \centering
    \begin{subfigure}{0.48\textwidth}
        \centering
        \includegraphics[width=\textwidth]{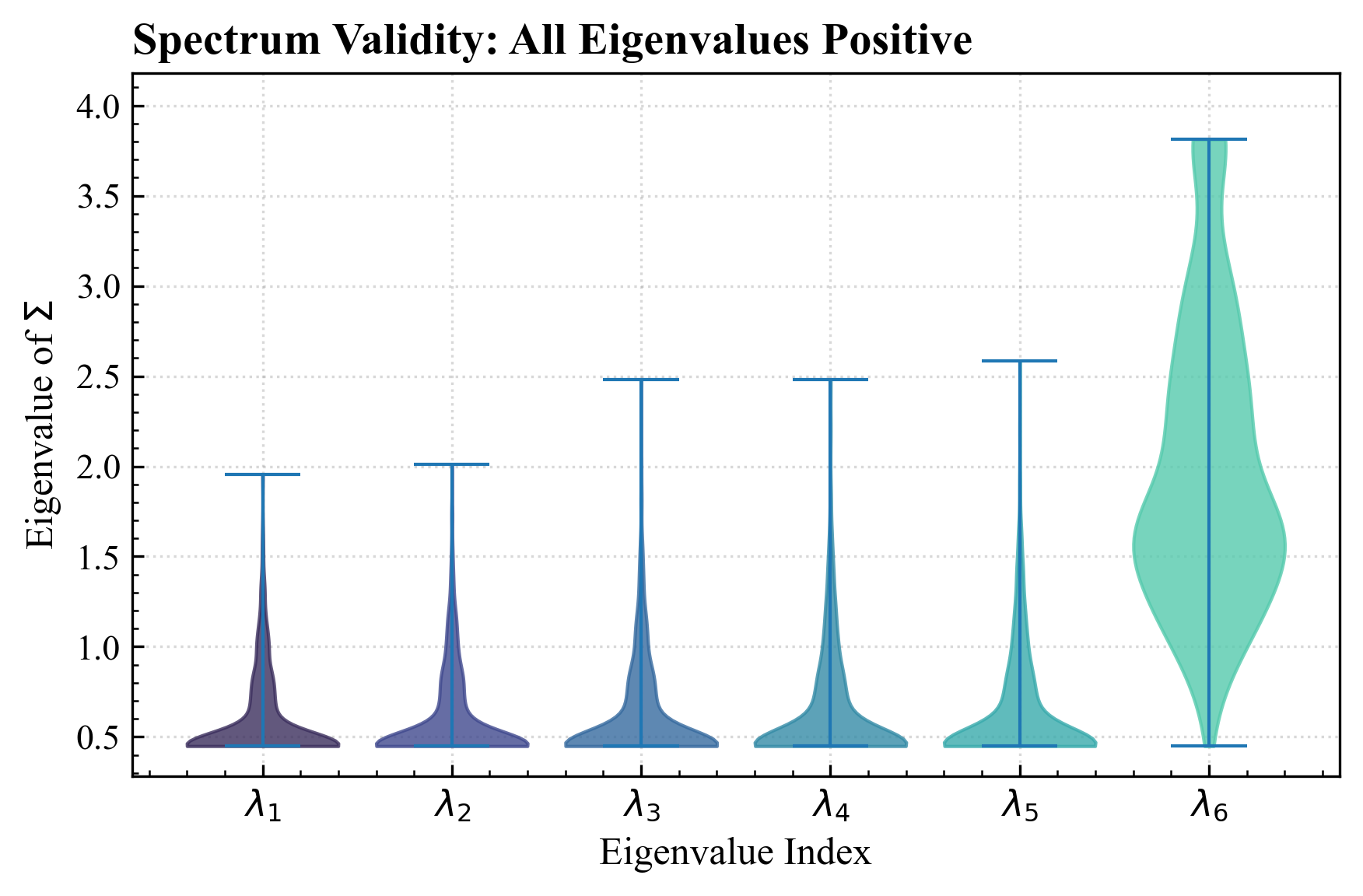}
        \caption{Spectrum Validity}
    \end{subfigure}
    \hfill
    \begin{subfigure}{0.48\textwidth}
        \centering
        \includegraphics[width=\textwidth]{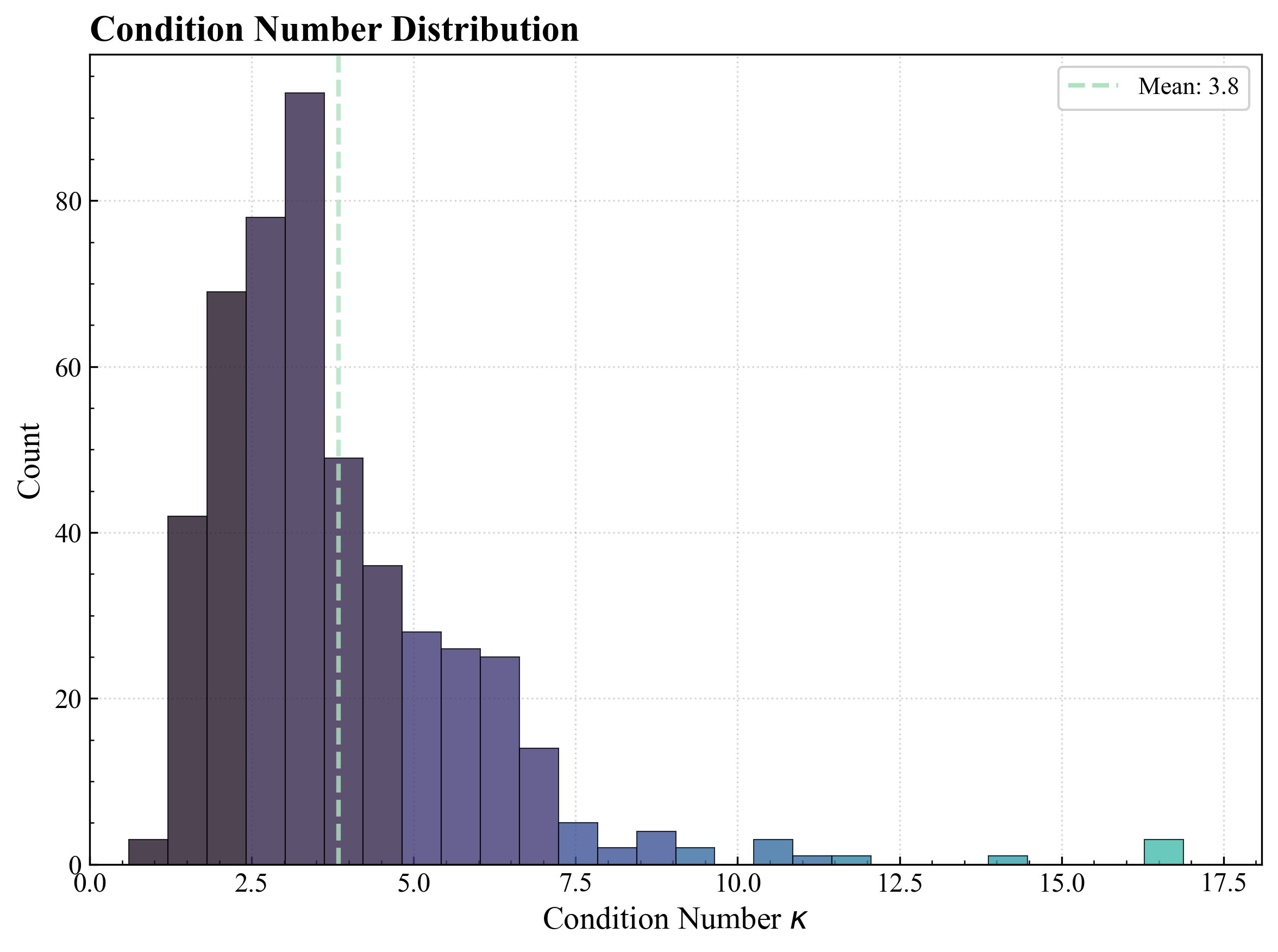}
        \caption{Conditioning}
    \end{subfigure}
    \caption{\textbf{Numerical stability analysis.} (a) Positive eigenvalues ensure SPD validity. (b) Moderate condition numbers indicate numerical stability.}
    \label{fig:appendix_uq}
\end{figure*}

\subsection{ModelNet40 SPD Analysis}
\label{app:modelnet_vis}

The 3D uncertainty visualization in Figure~\ref{fig:uncertainty_3d} demonstrates that our framework produces physically meaningful uncertainty estimates where uncertainty ellipsoids align with principal shape axes (demonstrating E(3)-equivariance), expand in regions with sparse point density (capturing sampling ambiguity), and preserve tensorial correlations across components.

We verify physical consistency through systematic validation of SPD properties (Figure~\ref{fig:inertia_spd_app}). Our predictions maintain strict SPD requirements ($>$99.9\% validity) with well-conditioned covariance structures (median condition number 6.8), contrasting sharply with unconstrained baselines that frequently violate physical constraints.

\begin{figure*}[htbp]
    \centering
    \includegraphics[width=0.85\textwidth]{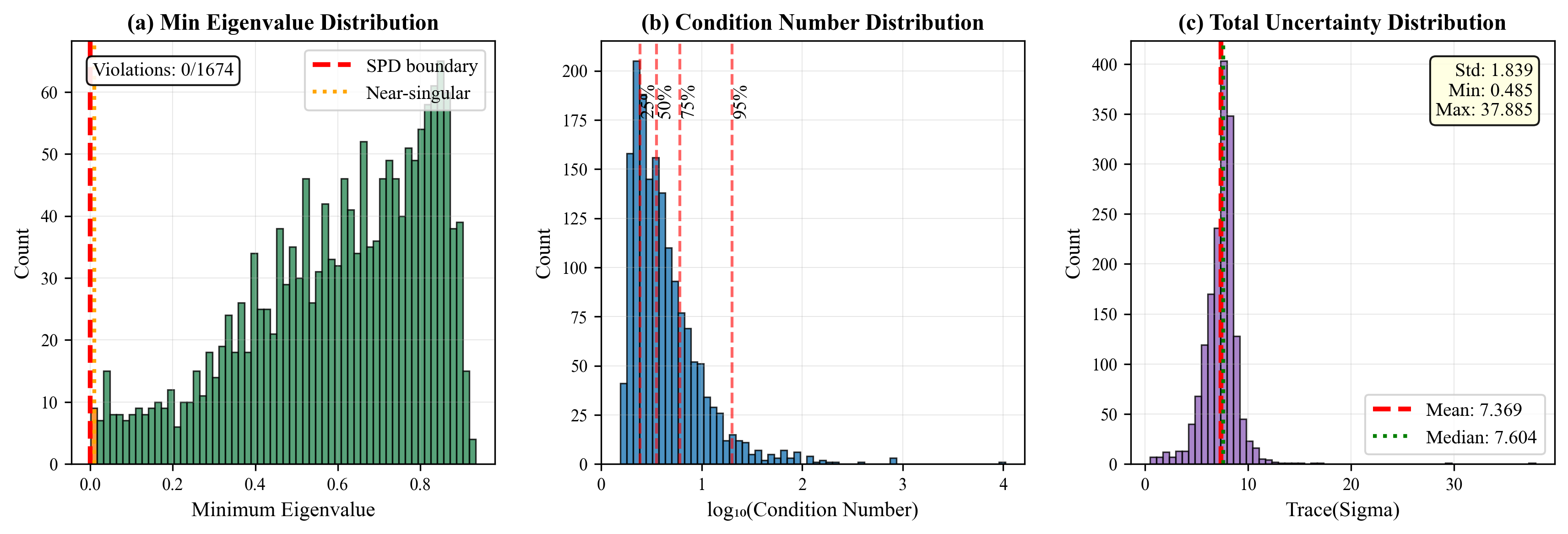}
    \caption{\textbf{SPD validity on inertia-tensor task.} (a) Minimum eigenvalue distribution (all positive). (b) $\log_{10}$ condition numbers (well-conditioned). (c) Total uncertainty $\mathrm{Tr}(\Sigma)$.}
    \label{fig:inertia_spd_app}
\end{figure*}

\subsection{Limitations and Future Work}
\label{app:limitations}

The SPD construction and scoring objective are representation-agnostic once an equivariant symmetric operator $A(X)$ is available. However, extending the full parameterization to higher-order tensor predictions requires group- and representation-specific basis construction. Our main implementation and empirical validation focus on symmetric rank-2 tensors, with the rank-4 elasticity experiment in Appendix~\ref{app:elasticity} serving as preliminary supporting evidence. Extending to fourth-order tensors (e.g., elasticity tensors) and beyond introduces two key computational challenges. First, the tensor basis construction scales as $O(d^{\ell})$ where $\ell$ is the tensor rank, making the basis enumeration for rank-4 and higher tensors substantially more expensive. Second, the covariance matrix dimension grows combinatorially—for a rank-$k$ symmetric tensor in 3D, the Kelvin-Mandel representation has dimension $(k+1)(k+2)/2$, leading to covariance matrices of size $O(k^4)$. This scaling necessitates careful memory management and may require approximations such as low-rank covariance factorization or hierarchical uncertainty modeling. Future work should explore more efficient equivariant basis constructions for higher-order tensors. In particular, integrating path-matrix based ICT decompositions \cite{shao2025high} could significantly reduce the overhead of basis enumeration for rank-4 and higher tensors, enabling the extension of our uncertainty framework to complex properties like the full elasticity tensor.

\paragraph{Modularity and Backbone Extensibility.}
A key strength of our framework is its modular design: the matrix-exponential UQ head is completely backbone-agnostic and can be integrated with any E(3)-equivariant architecture. While this study utilizes a standard message-passing backbone to validate the UQ mechanism, future work will explore pairing our UQ head with higher-accuracy architectures such as GoeCTP \cite{hua2026goectp} to combine state-of-the-art point prediction with calibrated, symmetry-preserving uncertainty estimates. This plug-and-play capability allows practitioners to add rigorous uncertainty quantification to existing equivariant models without architectural reengineering.
\end{document}